\DeclareMathOperator{\relu}{\mathrm{ReLU}}
\DeclareMathOperator{\softmax}{\mathrm{softmax}}
\DeclareMathOperator{\lnorm}{\mathrm{LN}}
\DeclareMathOperator{\selfattn}{\mathrm{attn}}  
\DeclareMathOperator{\mselfattn}{\mathrm{MSA}}
\DeclareMathOperator{\ff}{\mathrm{FF}}
\DeclareMathOperator{\sgn}{\mathrm{sgn}}
\theoremstyle{definition}
\newtheorem{definition}{Definition}
\theoremstyle{plain}
\newtheorem{theorem}{Proposition}
\newtheorem{lemma}{Lemma}
\newtheorem{assumption}{Assumption}
\def\Snospace~{\S{}}
\newcommand{\sat}{\textrm{s}} 
\newcommand{\approptoinn}[2]{\mathrel{\vcenter{
  \offinterlineskip\halign{\hfil$##$\cr
    #1\propto\cr\noalign{\kern2pt}#1\sim\cr\noalign{\kern-2pt}}}}}
\newcommand{\appropto}{\mathpalette\approptoinn\relax}
\title{Effects of Parameter Norm Growth During Transformer Training:\\
Inductive Bias from Gradient Descent}
\author{
    William Merrill\footnotemark[1]\;\footnotemark[2] \;
    Vivek Ramanujan\footnotemark[1]\;\footnotemark[3] \;
    Yoav Goldberg\footnotemark[1]\;\footnotemark[4] \;
    Roy Schwartz\footnotemark[5] \;
    Noah A. Smith\footnotemark[1]\;\footnotemark[3] \\
    \footnotemark[1]\; Allen Institute for AI \;
    \footnotemark[2]\; New York University \;
    \footnotemark[3]\; University of Washington \\
    \footnotemark[4]\; Bar Ilan University \;
    \footnotemark[5]\; Hebrew University of Jerusalem \\
    \texttt{willm@nyu.edu} \; \texttt{ramanv@cs.washington.edu}\\
    \texttt{\{yoavg,noah\}@allenai.org} \; \texttt{roys@cs.huji.ac.il}
    }
\begin{document}
\maketitle

\begin{abstract}
The capacity of neural networks like the widely adopted transformer is known to be very high. Evidence is emerging that they learn successfully due to inductive bias in the training routine, typically a variant of gradient descent (GD). To better understand this bias, we study the tendency for transformer parameters to grow in magnitude ($\ell_2$ norm) during training, and its implications for the emergent representations within self attention layers. Empirically, we document norm growth in the training of transformer language models, including T5 during its pretraining. As the parameters grow in magnitude, we prove that the network approximates a discretized network with saturated activation functions. Such ``saturated'' networks are known to have a reduced capacity compared to the full network family that can be described in terms of formal languages and automata. Our results suggest saturation is a new characterization of an inductive bias implicit in GD of particular interest for NLP. We leverage the emergent discrete structure in a saturated transformer to analyze the role of different attention heads, finding that some focus locally on a small number of positions, while other heads compute global averages, allowing counting. We believe understanding the interplay between these two capabilities may shed further light on the structure of computation within large transformers.

\end{abstract}

\section{Introduction} \label{sec:intro}

Transformer-based models \citep{vaswani2017attention} like BERT \citep{devlin2019bert}, XLNet \citep{yang2019xlnet}, RoBERTa \citep{liu2019roberta}, and T5 \citep{raffel2019t5} have pushed the state of the art on an impressive array of NLP tasks.
Overparameterized transformers are known to be universal approximators \citep{Yun2020Are},
suggesting their generalization performance ought to rely on useful biases or constraints imposed by the learning algorithm. Despite various attempts to study these biases in transformers \citep{rogers-etal-2020-primer, lovering2021predicting}, it remains an interesting open question what they are, or even how to characterize them in a way relevant to the domain of language.


In this work, we take the perspective that thoroughly understanding the dynamics of gradient descent (GD) might clarify the linguistic biases of transformers, and the types of representations they acquire.
We start by making a potentially surprising empirical observation (\autoref{sec:norm-growth}): the parameter $\ell_2$ norm grows proportional to $\sqrt{t}$ (where $t$ is the timestep) during the training of T5 \citep{raffel2019t5} and other transformers.
We refer to the phenomenon of growing parameter norm during training as \emph{norm growth}.
Previous work has analyzed norm growth in simplified classes of feedforward networks \citep{li2019exponential, telgarsky2020directional}, but, to our knowledge, it has not been thoroughly demonstrated or studied in the more complicated and practical setting of transformers.

Our main contribution is analyzing the \emph{effect} of norm growth on the representations within the transformer (\autoref{sec:effect}), which control the network's grammatical generalization.
With some light assumptions, we prove that any network where the parameter norm diverges during training approaches a \emph{saturated} network \citep{merrill2020aformal}: a restricted network variant whose discretized representations are understandable in terms of formal languages and automata.
Empirically, we find that internal representations of pretrained transformers approximate their saturated counterparts, but for randomly initialized transformers, they do not. This suggests that the norm growth implicit in training guides transformers to approximate saturated networks, justifying studying the latter \citep{merrill2019sequential} as a way to analyze the linguistic biases of NLP architectures and the structure of their representations. 

Past work \citep{merrill2019sequential, bhattamishra2020ability} reveals that
saturation permits two useful types of attention heads within a transformer: one that locally targets a small number of positions, and one that attends uniformly over the full sequence, enabling an ``average'' operation. Empirically, we find that both of these head types emerge in trained transformer language models. These capabilities reveal how the transformer can process various formal languages, and could also suggest how it might represent the structure of natural language.
Combined, our theoretical and empirical results shed light on the linguistic inductive biases imbued in the transformer architecture by GD, and could serve as a tool to analyze transformers, visualize them, and improve their performance.

Finally, we discuss potential causes of norm growth in \autoref{sec:norm-dynamics}. We prove transformers are approximately \textit{homogeneous} \cite{telgarsky2020directional}, a property that has been extensively studied in deep learning theory. With some simplifying assumptions, we then show how homogeneity might explain the $\sqrt{t}$ growth observed for T5.\footnote{Code available at \url{https://github.com/viking-sudo-rm/norm-growth}.}

\section{Background and Related Work} \label{sec:related-work}

\subsection{GD and Deep Learning Theory}
A simple case where deep learning theory has studied the generalization properties of GD is matrix factorization \citep{gunasekar2017implicit, arora2019implicit, razin2020implicit}. It has been observed that deep matrix factorization leads to low-rank matrix solutions. \citet{razin2020implicit} argued theoretically that this bias of GD cannot be explained as an implicit regularizer minimizing some norm. Rather, they construct cases where all parameter norms diverge during GD.

Similar ideas have emerged in recent works studying feedforward networks.
Analyzing biasless ReLU networks with cross-entropy loss, \citet{poggio2019theoretical, poggio2020complexity} show that the \textit{magnitude} ($\ell_2$ norm) of the parameter vector continues to grow during GD, while its \textit{direction} converges. \citet{li2019exponential} present a similar argument for \emph{scale-invariant} networks, meaning that scaling the parameters by a constant does not change the output.
Studying \emph{homogeneous} networks, \citet{telgarsky2020directional} show that the gradients become \emph{aligned} as $t \to \infty$, meaning that their direction converges to the parameter direction. This means the norm will grow monotonically with $t$.
The perspective developed by these works challenges the once conventional wisdom that the parameters converge to a finite local minimum during GD training. Rather, it suggests that GD follows a norm-increasing \textit{trajectory} along which network behavior stabilizes.
These analyses motivate investigation of this trajectory-driven perspective of training.

From a statistical perspective, work in this vein has considered the implications of these training dynamics for margin maximization \citep{poggio2019theoretical, nacson2019lexicographic, lyu2019gradient}. While these works vary in the networks they consider and their assumptions, they reach similar conclusions: GD follows trajectories diverging in the direction of a max-margin solution. As margin maximization produces a simple decision boundary, this property suggests better generalization than an arbitrary solution with low training loss. This point of view partially explains why growing norm is associated with better generalization performance.


\subsection{NLP and Formal Language Theory}

Norm growth has another interpretation for NLP models. Past work characterizes the capacity of infinite-norm networks in terms of formal languages and automata theory.
\citet{merrill2019sequential} and \citet{merrill2020aformal} propose \emph{saturation}, a framework for theoretical analysis of the capacity of NLP architectures. A network is analyzed by assuming it \emph{saturates} its nonlinearities, which means replacing functions like $\sigma$ and $\tanh$ with step functions. This is equivalent to the following definition:
\begin{definition}[Saturation; \citealp{merrill2020aformal}] \label{def:saturation}
Let $f(x; \theta)$ be a neural network with inputs $x$ and weights $\theta$.
The \textit{saturated network} $\sat f(x; \theta)$ is\footnote{The limit over $f$ is taken pointwise. The range of $\sat f$ is $\mathbb{R}$.}
\begin{equation*}
    \sat f(x; \theta) = \lim_{c \rightarrow \infty} f(x; c\theta) ,
\end{equation*}
where the limit exists, and undefined elsewhere.
\end{definition}
Saturation reduces continuous neural networks to discrete computational models resembling automata or circuits, making some kinds of formal linguistic analysis easier. For many common architectures, the saturated capacity is known to be significantly weaker than the full capacity of the network with rational-valued weights \citep{merrill2019sequential}, which, classically, is Turing-complete for even simple RNNs \citep{siegelmann1992turing}.

For example, one can hand-construct an RNN or LSTM encoding a stack in its recurrent memory \citep{kirov2012fractals}.
Stacks are useful for processing compositional structure in linguistic data \citep{chomsky1956three}, e.g., for semantic parsing.
However, a saturated LSTM does not have enough memory to simulate a stack \citep{merrill2019sequential}. Rather, saturated LSTMs resemble classical counter machines \citep{merrill2019sequential}: automata limited in their ability to model hierarchical structure \citep{merrill2020linguistic}.
Experiments suggest that LSTMs trained on synthetic tasks learn to implement counter memory \citep{weiss2018, suzgun2019lstm}, and that they fail on tasks requiring stacks and other deeper models of structure \citep{suzgun2019memory}.
Similarly, \citet{shibata2020lstm} found that LSTM language models trained on natural language data acquire saturated representations approximating counters.

Recent work extends saturation analysis to transformers \citep{merrill2019sequential, merrill2020aformal}. Saturated attention heads reduce to generalized hard attention, where the attention scores can tie. In the case of ties, the head output averages the positions with maximal scores.\footnote{\citet{Hahn_2020} identified weaknesses of \emph{strictly} hard attention, which is weaker than saturated attention.} While their power is not fully understood, saturated transformers can implement a counting mechanism similarly to LSTMs \citep{merrill2020aformal}. In practice, \citet{bhattamishra2020ability} show transformers can learn tasks requiring counting, and that they struggle when more complicated structural representations are required.
\citet{ebrahimi-etal-2020-self} find that attention patterns of certain heads can emulate bounded stacks, but that this ability falls off sharply for longer sequences.
Thus, the abilities of trained LSTMs and transformers appear to be predicted by the classes of problems solvable by their saturated counterparts. \citet{merrill2020aformal} conjecture that the saturated capacity might represent a class of tasks implicitly learnable by GD, but it is unclear \emph{a priori} why this should be the case. This work aims to put this conjecture on more solid theoretical footing: we argue that approximate saturation arises in transformers as a result of norm growth during training.\footnote{This relates to \citet{correia-etal-2019-adaptively}, who modify the transformer to facilitate approximately sparse attention. In contrast, we will show that approximate sparsity (i.e., saturation) arises implicitly in \emph{standard} transformers.}

\section{Norm Growth in Transformers} \label{sec:norm-growth}

\begin{figure*}[ht]
    \centering
    \includegraphics[width=\columnwidth]{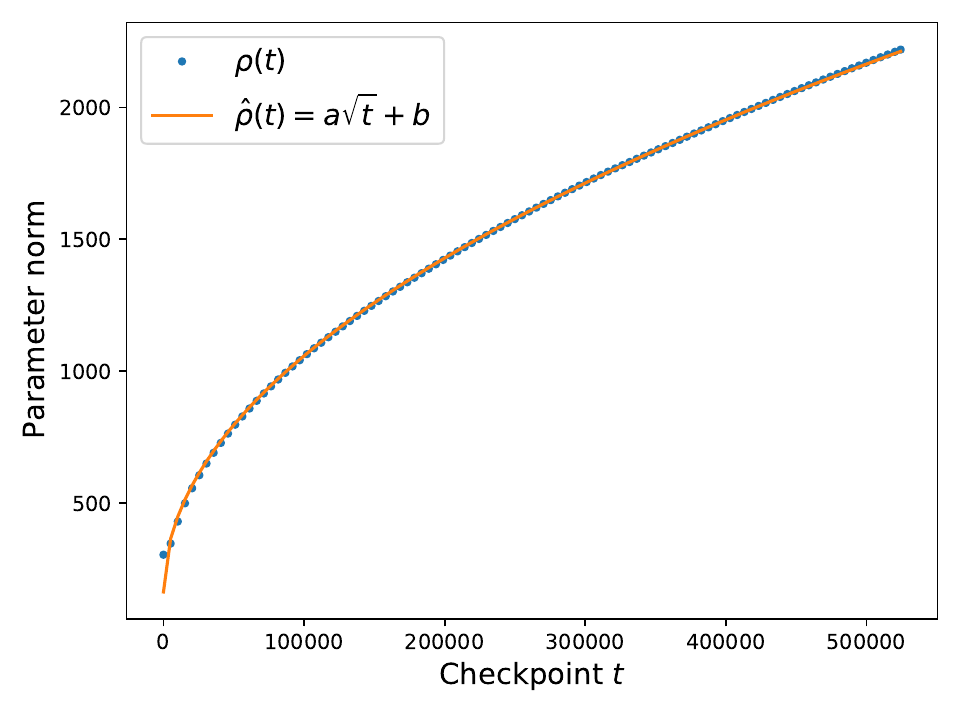}
    \includegraphics[width=\columnwidth]{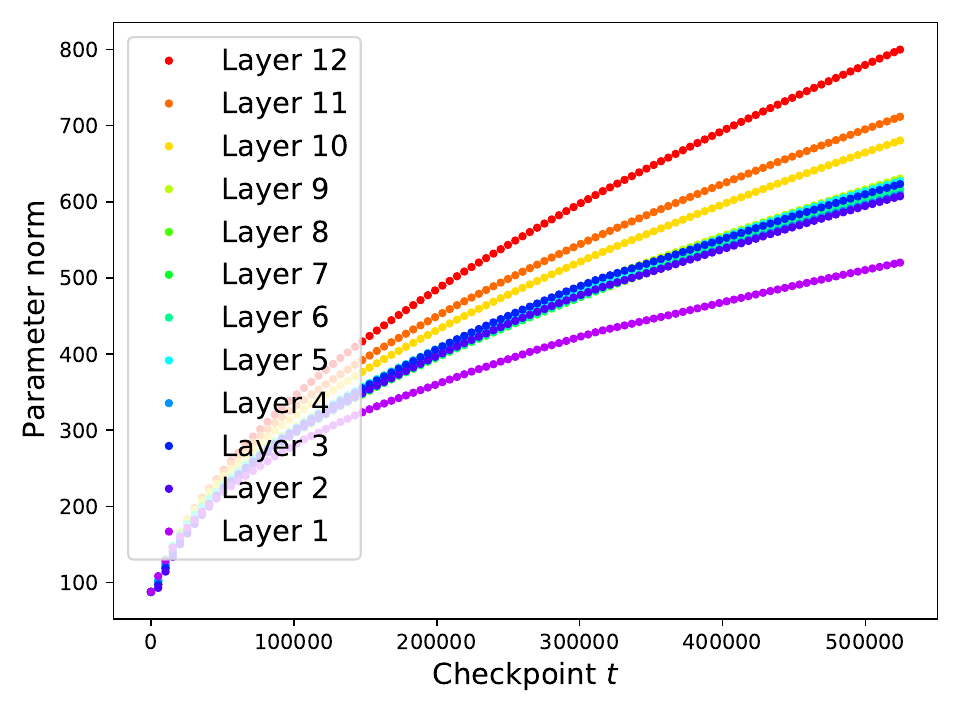}
    \includegraphics[width=\columnwidth]{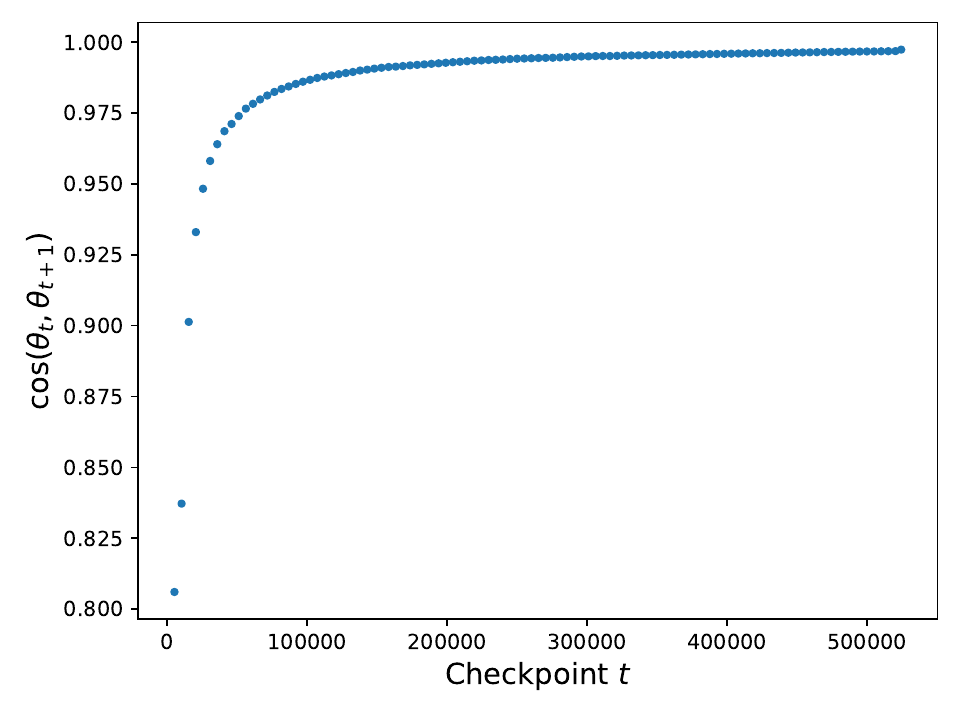}
    \includegraphics[width=\columnwidth]{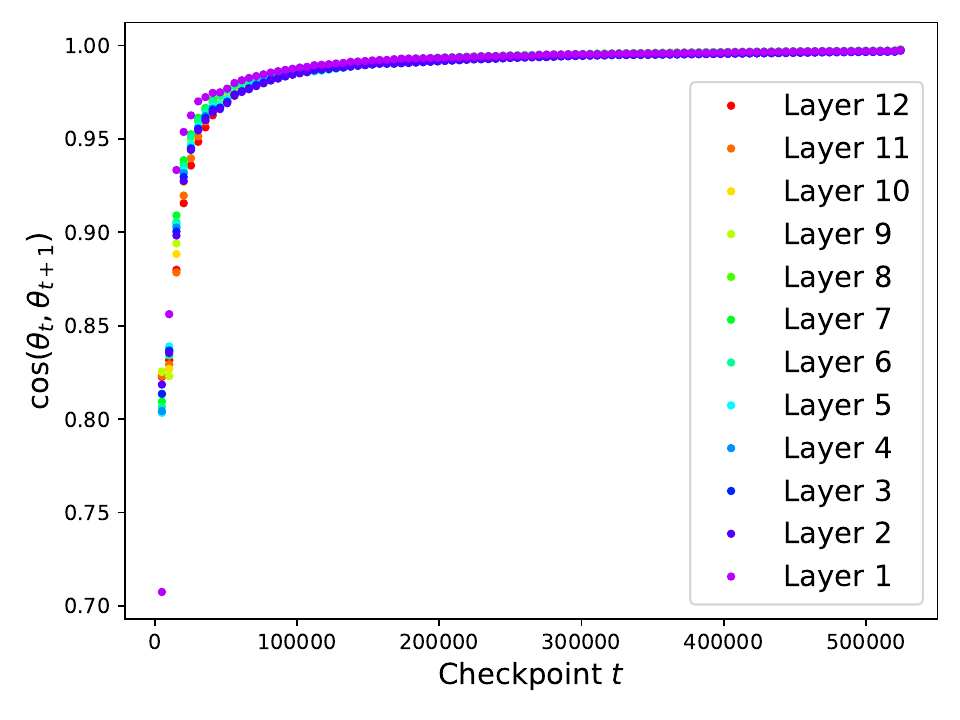}
    \caption{\textbf{Top}: Norm growth during T5 pretraining, with a coefficient $r^2 = 1.00$. The right is broken down by layer. \textbf{Bottom}: cosine similarity between subsequent parameter checkpoints.}
    \label{fig:t5-param-growth}
\end{figure*}

We start with the observation that the parameter $\ell_2$ norm grows during training for practical transformer language models. We first consider the parameter norm of $104$ historical checkpoints from T5-base \citep{raffel2019t5} pretraining, a 220M parameter model, which was trained using the AdaFactor optimizer \citep{adafactor}.
Further details are in \autoref{sec:details}.

\autoref{fig:t5-param-growth} shows that the T5 norm follows a $\sqrt{t}$ trend, where $t$ is time in training steps.
The top right of \autoref{fig:t5-param-growth} breaks down the growth trend by layer. Generally, the norm grows more quickly in later layers than in earlier ones, although always at a rate proportional to $\sqrt{t}$.\footnote{We encourage future works that pretrain new transformer language models to track metrics around norm growth.}
Next, in the bottom row of \autoref{fig:t5-param-growth}, we plot the cosine similarity between each parameter checkpoint $\theta_{t+1}$ and its predecessor $\theta_t$. This rapidly approaches $1$, suggesting the ``direction'' of the parameters ($\theta_t / \norm{\theta_t}$) converges. The trend in directional convergence looks similar across layers.

We also train smaller transformer language models with 38M parameters on Wikitext-2 \citep{merity2016pointer} and the Penn Treebank (PTB;  \citealp{marcus-etal-1993-building}). 
We consider two variants of the transformer: pre-norm and post-norm, which vary in the relative order of layer normalization and residual connections \citep[cf.][]{xiong2020layer}. Every model exhibits norm growth over training.\footnote{\textbf{Erratum:} In a previous paper version, this footnote reported perplexity numbers that we found to be irreproducible, as they were likely obtained with a non-standard truncated version of the PTB dataset. We have thus removed them.}

Combined, these results provide evidence that the parameter norm of transformers tends to grow over the course of training.
In the remainder of this paper, we will discuss the implications of this phenomenon for the linguistic biases of transformers, and then discuss potential causes of the trend rooted in the optimization dynamics.

\section{Effect of Norm Growth} \label{sec:effect}

\autoref{sec:norm-growth} empirically documented that the parameter norm grows proportional to $\sqrt{t}$ during T5 pretraining.
Now, we move to the main contribution of our paper: the implications of norm growth for understanding transformers' linguistic inductive biases.
In particular, \autoref{thm:saturation-main} says uniform norm growth across the network guides GD towards saturated networks. Thus, saturation is not just a useful approximation for analyzing networks, but a state induced by training with enough time.
\begin{theorem}[Informal] \label{thm:saturation-main}
Let $\theta_t \in \mathbb{R}^n$ be parameters at step $t$ for $f(x; \theta_t)$.
If every scalar parameter $\theta_t^i$ diverges at the same rate up to a constant, then $f$ converges pointwise to a saturated network.
\end{theorem}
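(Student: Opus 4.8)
The plan is to exploit the fact that the saturation limit in \autoref{def:saturation} is taken along the ray $c \mapsto c\theta$, and that the uniform-divergence hypothesis is exactly what pins the trajectory $\{\theta_t\}$ to such a ray. First I would formalize ``every $\theta_t^i$ diverges at the same rate up to a constant'' as: there is a scalar rate $\rho_t \to \infty$ and a fixed direction $\hat\theta \in \mathbb{R}^n$ with $\theta_t/\rho_t \to \hat\theta$, i.e.\ $\theta_t = \rho_t(\hat\theta + \delta_t)$ with $\delta_t \to 0$ (consistent with the observed directional convergence in \autoref{fig:t5-param-growth}). Fixing an input $x$, the claim that $f$ converges pointwise to a saturated network then becomes
\[
  \lim_{t\to\infty} f(x;\theta_t) \;=\; \sat f(x;\hat\theta) \;=\; \lim_{c\to\infty} f(x;c\hat\theta).
\]

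To connect the two limits I would write, for each $t$,
\[
  f(x;\theta_t)-\sat f(x;\hat\theta)
  =\underbrace{\big(f(x;\rho_t(\hat\theta+\delta_t))-f(x;\rho_t\hat\theta)\big)}_{(\mathrm I)}
  +\underbrace{\big(f(x;\rho_t\hat\theta)-\sat f(x;\hat\theta)\big)}_{(\mathrm{II})}.
\]
Term $(\mathrm{II})$ vanishes immediately, since $\rho_t\to\infty$ and it is the saturation limit taken exactly along the ray through $\hat\theta$. All of the work is in $(\mathrm I)$, which measures how much moving off the ray by the vanishing relative error $\delta_t$ perturbs the output.

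The main obstacle is that $(\mathrm I)$ is \emph{not} controlled by ordinary continuity of $f$ in $\theta$: although $\delta_t\to 0$, it is multiplied by the diverging scale $\rho_t$, so the true parameter gap $\norm{\rho_t(\hat\theta+\delta_t)-\rho_t\hat\theta}=\rho_t\norm{\delta_t}$ may stay bounded away from $0$ or even diverge. To control it I would induct over the transformer's computation graph, using that in the large-norm regime the output is governed by \emph{discrete} decisions that are locally constant in $\theta$: each affine map commutes with positive scaling up to a bias that is asymptotically negligible against the diverging linear part; layer normalization is invariant to the scale of its input, re-normalizing the growing activations at every layer; $\relu$ is positively homogeneous, so only the sign pattern of its pre-activations matters; and each attention $\softmax$ receives scores whose magnitudes diverge, sharpening to the generalized hard attention (uniform average over the argmax set) of the saturated network. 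Because the sign patterns and argmax sets are locally constant in $\theta$, they agree for $\rho_t(\hat\theta+\delta_t)$ and $\rho_t\hat\theta$ once $t$ is large, and a layerwise induction then shows each intermediate representation shares the same limit under both, forcing $(\mathrm I)\to 0$.

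The one genuinely delicate case is \emph{ties}: if two attention scores, or a $\relu$ pre-activation and $0$, are asymptotically equal, the relevant decision is not locally constant, and the vanishing perturbation $\delta_t$ can flip it, breaking the match in $(\mathrm I)$. I expect to rule this out with a genericity condition (the limiting scores are pairwise distinct, i.e.\ $\hat\theta$ avoids a measure-zero set) or, for the purely informal statement, to assume exact proportionality $\theta_t=\rho_t\hat\theta$; then $\delta_t\equiv 0$, $(\mathrm I)$ vanishes identically, ties are handled correctly by the averaging built into $\sat f$, and the result is immediate. The perturbation-plus-induction argument above is precisely what would upgrade this idealized case to the stated ``up to a constant'' hypothesis.
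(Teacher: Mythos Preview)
Your proposal is more elaborate than the paper's actual argument. The paper's formal version (\autoref{sec:uniform}) interprets the hypothesis as exact proportionality: there is a scalar sequence $c(t)\to\infty$ and a fixed nonzero $\theta'$ with $\theta_t = c(t)\theta'$ (the statement is phrased as ``$\theta_t \to \theta'\cdot c(t)$'', but the proof treats it as equality). The entire proof is then two substitutions:
\[
\lim_{t\to\infty} f(x;\theta_t)=\lim_{t\to\infty} f(x;c(t)\theta')=\lim_{c\to\infty} f(x;c\theta')=\sat f(x;\theta').
\]
This is exactly the ``assume exact proportionality, $\delta_t\equiv 0$'' case you describe in your last paragraph; the paper does not attempt anything beyond it.

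Your decomposition into $(\mathrm I)+(\mathrm{II})$ and the layerwise induction to kill $(\mathrm I)$ is therefore genuinely additional content: you are trying to upgrade the hypothesis from $\delta_t\equiv 0$ to $\delta_t\to 0$. Two remarks on that. First, the paper's theorem is stated for an arbitrary network $f$, while your control of $(\mathrm I)$ relies on transformer-specific structure (scale-invariance of layer norm, positive homogeneity of $\relu$, softmax sharpening); so your argument would prove a transformer-specific strengthening, not the theorem as written. Second, your tie discussion is exactly where the general statement breaks: for arbitrary $f$ the stronger hypothesis $\theta_t/\rho_t\to\hat\theta$ does \emph{not} suffice without a genericity assumption, so the paper's choice to work under exact proportionality is not merely a convenience but is what makes the architecture-agnostic claim true.
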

The proof is in \autoref{sec:uniform}.
\autoref{thm:saturation-main} assumes not just norm growth, but uniform norm growth, meaning no parameter can asymptotically dominate any other. Notably, uniform growth implies directional convergence.
Accepting uniform growth for a given training regimen, we expect transformers to converge to saturated networks with infinite training.
Based on \autoref{sec:norm-growth}, the T5 norm appears to grow $\propto \sqrt{t}$ uniformly across the network, suggesting the uniform growth condition is reasonable.
As we will discuss later in \autoref{sec:norm-dynamics}, we expect the growth trend to depend heavily on the learning rate schedule.


\begin{figure*}[ht]
    \centering
    \includegraphics[width=\columnwidth]{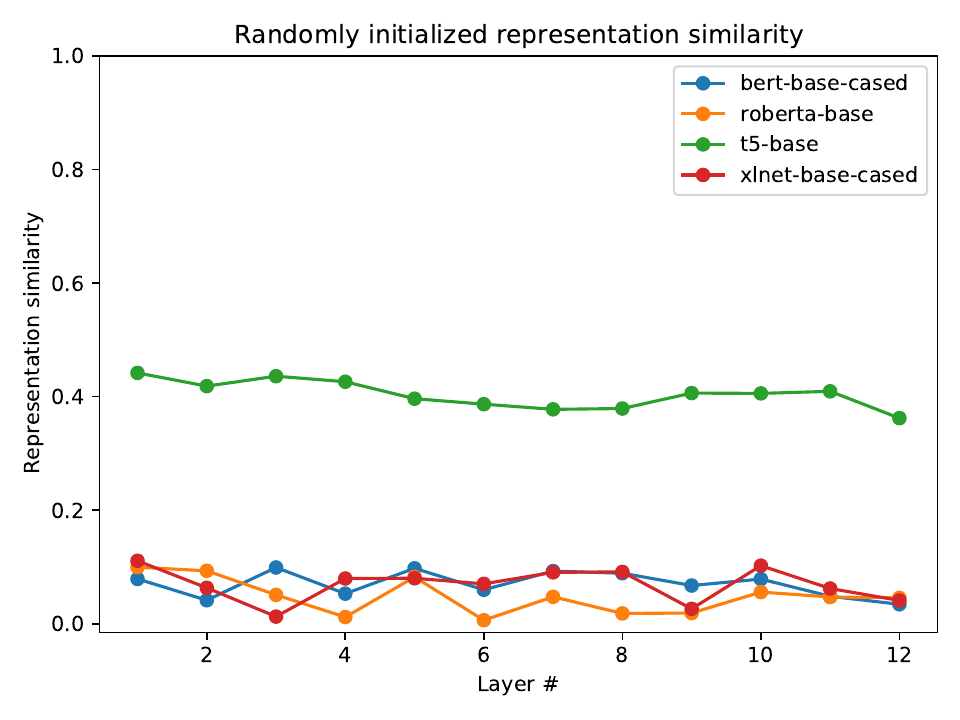}
    \includegraphics[width=\columnwidth]{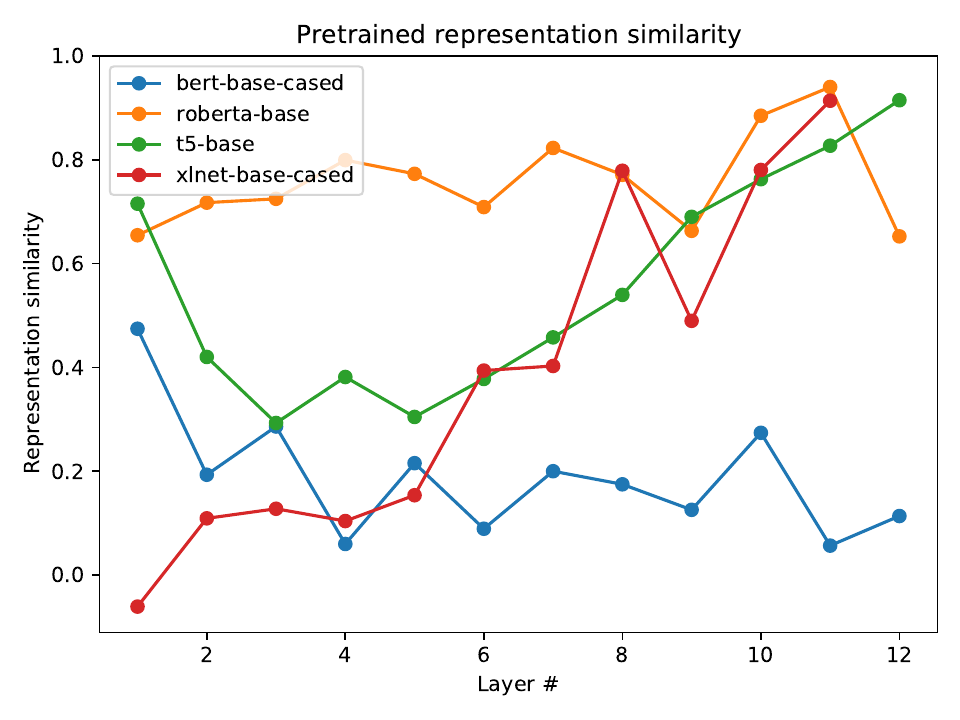}
    \caption{Cosine similarities of the unsaturated and saturated ($c=1{,}000$) transformer representations, by layer. We compare randomly initialized transformers (\textbf{left}) to pretrained ones (\textbf{right}).}
    \label{fig:rep-similarity}
\end{figure*}

\subsection{Saturated Transformers}\label{sec:sat_transformers_empirical}

Having established that norm growth should lead to saturation, we now empirically measure the saturation levels in T5 and other transformer models.

\paragraph{Large transformers are highly saturated.} \label{sec:rep-sim}
Since $\norm{\theta_t}$ empirically grows during training, we expect high cosine similarity between the representations in trained networks and saturated representations. We estimate this as the cosine similarity between $f(x; \theta)$ and $f(x; c\theta)$ for some large $c$ (in practice, $1{,}000$). 
We consider the ``base'' versions of pretrained BERT, RoBERTa, T5, and XLNet
(pretrained on masked language modeling),
and compute the mean saturation over $100$ input sentences from the Brown corpus \citep{francis1989manual}. To match standard practice, each sentence is truncated at $512$ word pieces. \autoref{fig:rep-similarity} plots the similarity for each layer of each model. We compare the pretrained transformers against a randomly initialized baseline.
For every model type, the similarity is higher for the pretrained network than the randomly initialized network, which, except for T5, is ${\sim}0$. For T5 and XLNet, the similarity in the final layer is ${\geq}0.9$, whereas, for RoBERTa, the final similarity is $0.65$ (although $0.94$ in the penultimate layer).
For T5 and XLNet, similarity is higher in later layers, which is potentially surprising, as one might expect error to compound with more layers. This may relate to the fact that the norm grows faster for later layers in T5. One question is why the similarity for BERT is lower than these models. As RoBERTa is architecturally similar to BERT besides longer training, we hypothesize that RoBERTa's higher similarity is due to longer pretraining.

        

\paragraph{Small transformers reach full saturation.} Each of the transformers trained on Wikitext-2 and PTB reached a saturation level of $1.00$.
It is unclear why these models saturate more fully than the pretrained ones, although it might be because they are smaller.\footnote{Qualitatively, we observed that $^*$-small transformers tended to be more saturated than the $^*$-base models.}
For our LMs, the feedforward width ($512$) is less than for T5-base, while the encoder depth and width are the same.
Other possible explanations include differences in the initialization scheme, optimizer, and training objective (masked vs. next-word modeling).
See \autoref{sec:details} for full hyperparameters.
\begin{figure*}
    \centering
    \includegraphics[width=\columnwidth]{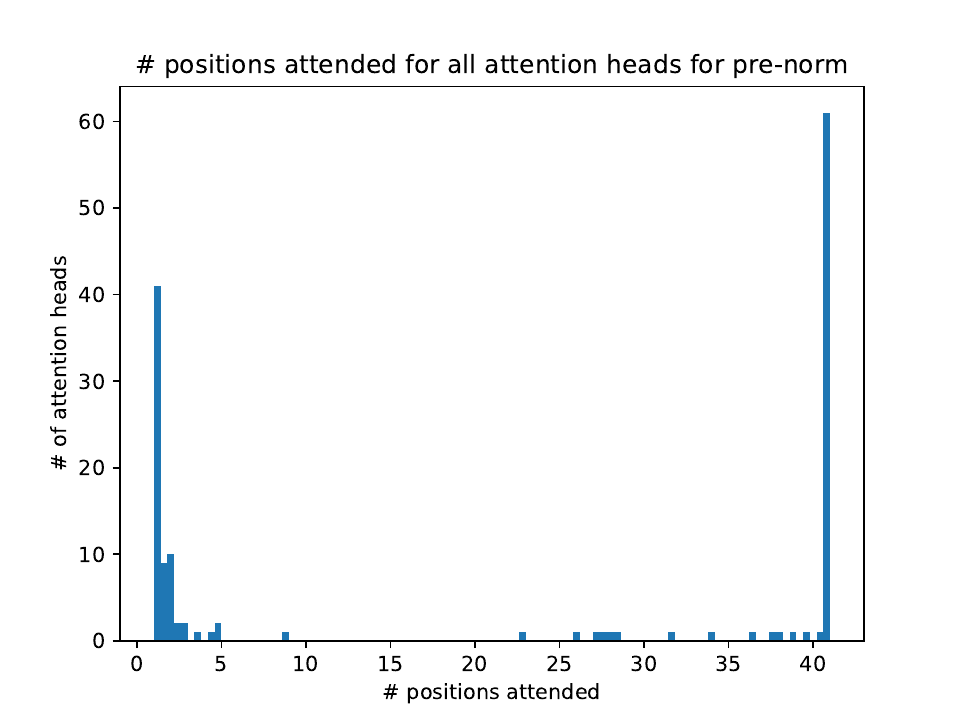}
    \includegraphics[width=\columnwidth]{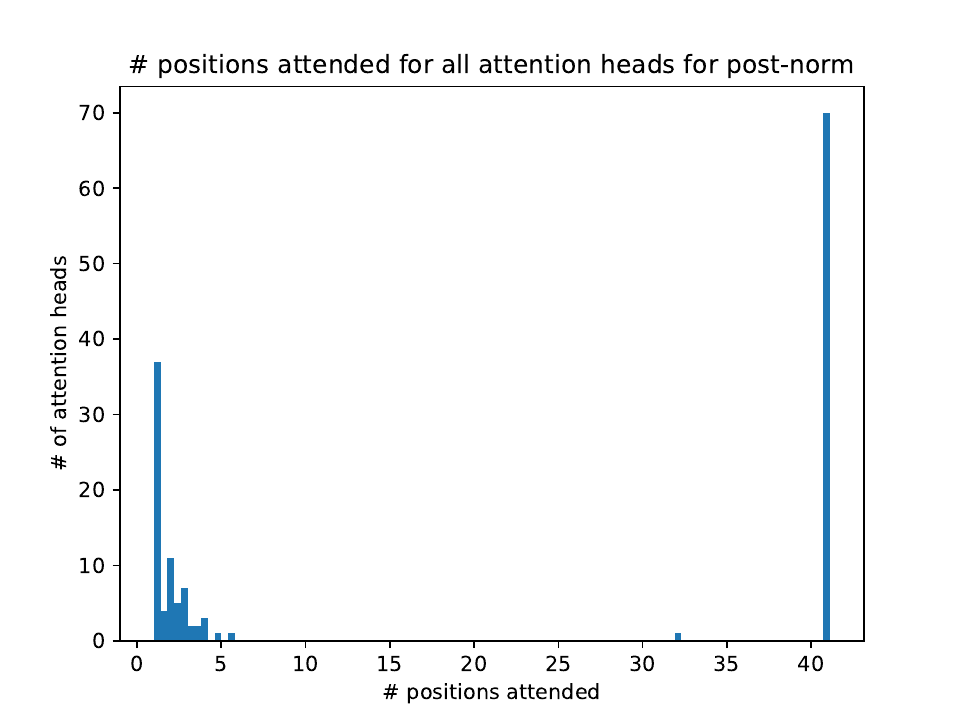}
    \caption{Distribution of the number of positions attended to for all heads in the PTB language models. The \textbf{left} plot is pre-norm, and the \textbf{right} is post-norm. Values are averaged over 200 sentences from the development set.}
    \label{fig:attn-dists}
\end{figure*}

\subsection{Power of Saturated Attention} \label{sec:sat_transformers}

We have shown that transformer training increases the parameter norm (\autoref{sec:norm-growth}), creating a bias towards saturation (\autoref{sec:sat_transformers_empirical}). Now, we discuss the computational capabilities of saturated transformers, and empirically investigate how they manifest in pretrained transformers.
What computation can saturated transformers perform? We review theoretical background about saturated attention, largely developed by \citet{merrill2019sequential}.
Let $H$ (sequence length $n$ by model dimension $d$) be the input representation to a self attention layer. We assume a standard self attention mechanism with key, query, and value matrices $K, Q, V.$\footnote{To simplify presentation, we omit bias terms.}
Saturated attention resembles standard attention where softmax is constrained to a generalization of ``argmax'' \citep{merrill2019sequential}:
\begin{equation*}
    \sat \selfattn(H; Q, K, V) = \arg \max ( H Q K^\top H^\top ) HV .
\end{equation*}
We define this vectorized $\arg \max(A)$ as
\begin{align*}
    \mathcal M(A_i) &= \{ j \mid a_{ij} = \max_k a_{ik} \} \\
    \arg \max(A_i)_{j} &=
    \begin{cases}
        1 / \abs{\mathcal M(A_i)} & \textrm{if} \; j \in \mathcal M(A_i) \\
        0 & \textrm{otherwise.}
    \end{cases}
\end{align*}
Crucially, in the case of ties, $\arg \max(A)$ returns a uniform distribution over all tied positions.
Saturated attention can retrieve the ``maximum'' value in a sequence according to some similarity matrix.
It is also capable of restricted counting \citep{merrill2020aformal}.
Formalizing these observations, we identify two useful computational operations that are reducible to saturated self attention: \emph{argmax} and \emph{mean}. Let $h_i$ represent the input representation at each time step $1 \leq i \leq n$.
\begin{compactenum}
    \item \textbf{Argmax:} Set $V = \mathrm{Id}$. Then the self attention mechanism computes a function recovering the element of $H$ that maximally resembles $h_i$ according to a quadratic form $M = KQ^\top$. If there is a tie for similarity, a uniform average of the maximal entries in $H$ is returned.
    \begin{equation*}
        \mathrm{argmax}(H; M) = \arg \max_j h_i M h^\top_j .
    \end{equation*}
    \item \textbf{Mean:} Parameterize the head to attend uniformly everywhere. Then the head computes a function taking a uniform average of values:
    \begin{equation}
        \mathrm{mean}(H; V) = \frac{1}{n} \sum_{j=1}^n Vh_j. \label{eq:sum}
    \end{equation}
\end{compactenum}
These constructions demonstrate some useful computational abilities of saturated transformers. Due to the summation in \eqref{eq:sum}, the mean operation (or near variants of it) can be used to implement counting, which allows recognizing languages like $a^nb^nc^n$ \citep{merrill2020aformal}. Empirically, \citet{bhattamishra2020ability} find trained networks can learn to recognize counter languages that rely on computing means, failing on more complicated languages like Dyck-2. Our findings partially justify why transformers can learn these languages: they lie within the capacity of \textit{saturated} transformers.

\subsection{Learned Attention Patterns}

Recall that the small language models trained in \autoref{sec:sat_transformers_empirical} reach $1.00$ saturation.
It follows that we can convert them to saturated transformers (by multiplying $\theta$ by a large constant $c$)
without significantly shifting the representations in cosine space.
We will evaluate if the saturated attention heads manifest the argmax and mean constructions from \autoref{sec:sat_transformers}.

As discussed in \autoref{sec:sat_transformers}, saturated attention can parameterize both argmax and mean heads. An argmax head
should attend to a small number of positions. A mean head, on the other hand, attends uniformly over the full sequence. Are both patterns acquired in practice by our models? We plot the distribution of the number of positions attended to by each head in the saturated PTB models in \autoref{fig:attn-dists}. The distribution is bimodal, with one mode at $1$, and the other around $41$, representing the mean sequence length of a $83$-length encoder with positional masking to prevent lookahead.
The empirical mode around $1$ corresponds to heads that are argmax-like. The mode around $41$, on the other hand, corresponds to mean-like heads, since it implies uniform attention over the masked sequence. Thus, our analysis suggests that analogs of both types of attention heads theorized in \autoref{sec:sat_transformers} are acquired in transformers in practice.
In the pre-norm transformer, which performs substantially better, there are also a small number of heads lying between the two modes. We defer the investigation of the function of these heads to future work.

\section{Explanation for Norm Growth} \label{sec:norm-dynamics}

We have documented norm growth in T5 and other transformers (\autoref{sec:norm-growth}) and showed how it induces partial saturation in their representations (\autoref{sec:effect}).
This section points towards an understanding of \emph{why} the parameter norm grows over the course of training, grounded in results about norm growth from deep learning theory. We do not analyze specific optimizers directly; instead, we analyze norm growth within simplified models of training dynamics taken from the literature. We then evaluate how these candidate dynamics models fit T5's training.

\subsection{Setup}
Let $\delta_t \in \mathbb{R}^n$ denote the optimizer step at time $t$, i.e., $\delta_t = \theta_{t+1} - \theta_t$. We write $\eta_t$ for the learning rate at $t$.\footnote{Without loss of generality, the arguments presented here can be seen as applying to an individual parameter in the network, or the vector of all concatenated network parameters.} Let $\nabla_{\theta_t} L$ denote the gradient of the loss with respect to $\theta_t$.
By GD, we refer to the update
$\delta_t = - \eta_t \nabla_{\theta_t} L$.\footnote{Note that, in practice, T5 was trained with AdaFactor, whereas the setup in this section assumes simpler optimizers.}
In contrast, we will use the term \emph{gradient flow} to refer to its continuous relaxation, specified by an analogous differential equation:
\begin{equation*}
    \dv{\theta_t}{t} = -\eta_t \nabla_{\theta_t} L .
\end{equation*}

\subsection{Homogeneity}

We will rely on properties of homogeneous networks, a class of architectures well-studied in deep learning theory \citep{telgarsky2020directional}.
\begin{definition}[Homogeneity]
A function $f(x; \theta)$ is \textit{$k$-homogeneous} in $\theta$ iff, for all $c \geq 0$, $f(x; c\theta) = c^kf(x; \theta)$.
We further say that $f$ is homogeneous iff there exists some $k$ such that $f$ is $k$-homogeneous.
\end{definition}
Many common components of modern neural networks are homogeneous \citep{li2019exponential}. Furthermore, as various computations within a neural network preserve homogeneity (\autoref{sec:approx-homogeneity}), some full networks are also homogeneous. An example of a fully homogeneous neural network is a feedforward ReLU network without bias terms.

\begin{figure}
    \centering
    \includegraphics[width=.48\textwidth]{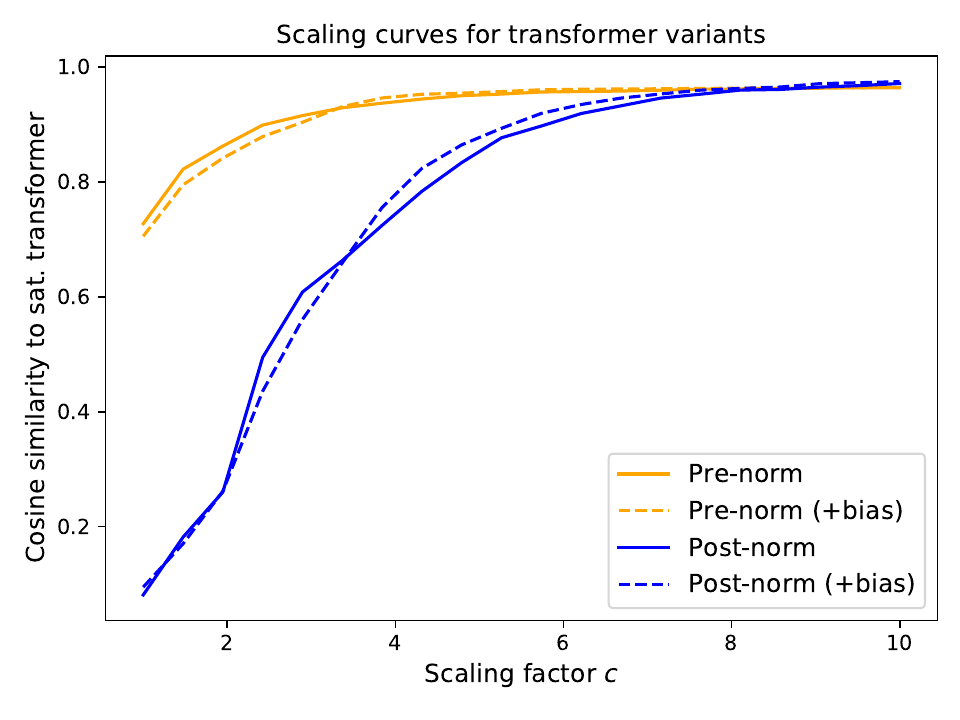}
    \caption{Approximate cosine similarity of $f(x; c \theta)$ to $\sat f(x; \theta)$ for randomly initialized transformers $f$.
    $\sat f(x; \theta)$ is approximated as in \autoref{fig:rep-similarity}.
    }
    \label{fig:trans-scaling-error}
\end{figure}

Why is homogeneity relevant for transformers?
Transformers are not homogeneous, but they are \emph{almost} homogeneous. We formalize this as:
\begin{definition}[Approx.~homogeneity] \label{def:approx-homo}
A scalar\footnote{A vector function is approximately $k$-homogeneous if this holds for all its elements.} function $f(x; \theta)$ is approximately $k$-homogeneous in $\theta$ iff
there exist $d, \rho$ s.t., for $c \geq 1$ and $\norm{\theta} \geq \rho$,
\begin{equation*}
    \abs{f(x; c\theta) - c^kf(x; \theta)} \leq
    \exp(-d \norm{\theta}) .
\end{equation*}
\end{definition}
In other words, as $\norm{\theta}$ grows, $f$ approximates a homogeneous function with exponentially vanishing error.
In \autoref{sec:transformers}, we prove transformer encoders without biases are approximately $1$-homogeneous.
In \autoref{fig:trans-scaling-error}, we compare the cosine similarity of transformers with and without biases to their saturated variants, as a function of a constant $c$ scaling their weights. An approximately homogeneous function rapidly approach $1.0$ as $c$ increases. We find similar curves for transformers with and without biases, suggesting biasless transformers are similarly homogeneous to transformers with biases.\footnote{\citet{lyu2019gradient} find similar results for feedforward ReLU networks. It is an interesting puzzle why networks with biases appear similarly homogeneous to those without biases.}

Since multiplying two homogeneous functions adds their homogeneity, a transformer encoder followed by a linear classifier is approximately $2$-homogeneous.
A key property of homogeneous functions is Euler's Homogeneity Theorem: the derivative of a $k$-homogeneous function is $(k-1)$-homogeneous.
Thus, we will assume the gradients of the linear classifier output are roughly $1$-homogeneous, which under simple GD implies:
\begin{assumption} \label{ass:euler}
Let $\theta_t$ include all encoder and classifier parameters.
Let $\appropto$ mean ``approximately proportional to''.
For large enough $t$ during transformer training, $\norm{\delta_t} \appropto \eta_t \norm{\theta_t}$.
\end{assumption}

\begin{figure*}[ht]
    \centering
    \includegraphics[width=\columnwidth]{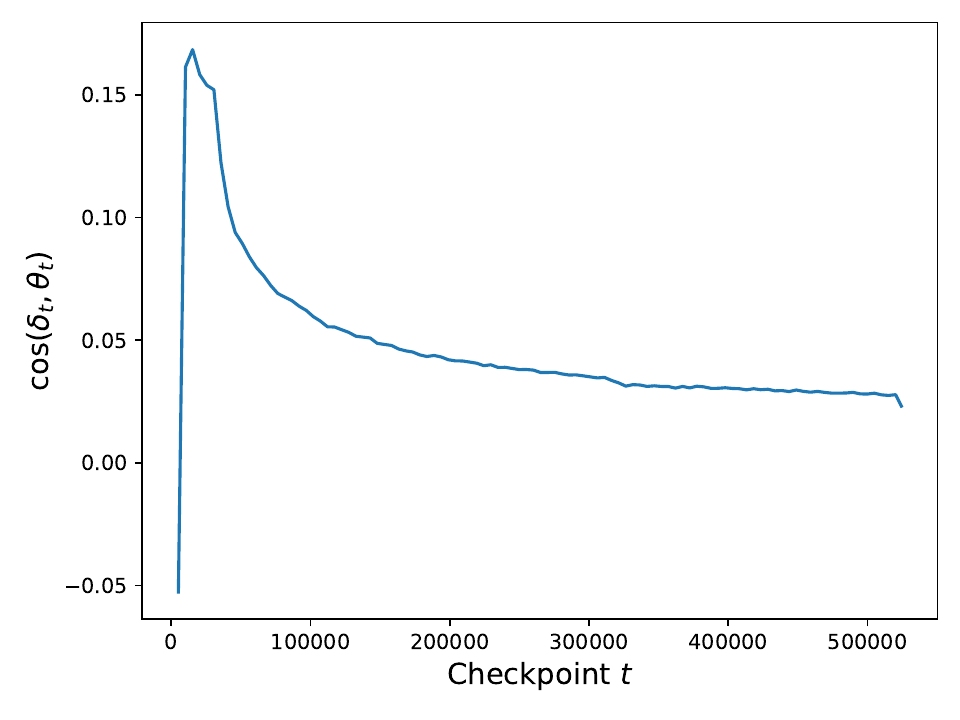}
    \includegraphics[width=\columnwidth]{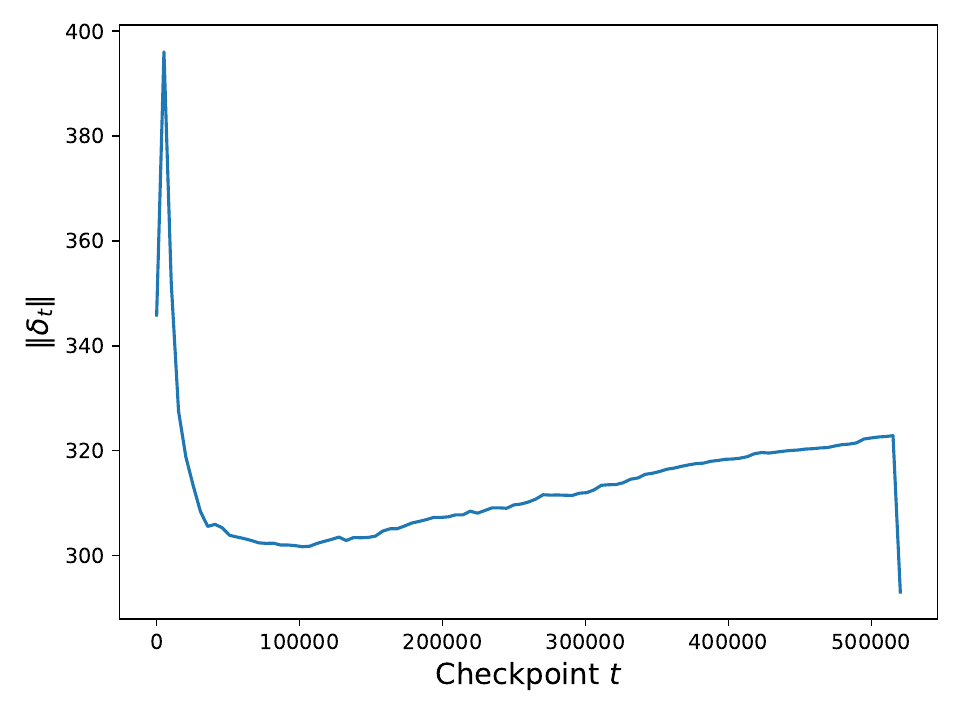}
    \caption{Alignment (cosine similarity of $\delta_t$ and $\theta_t$) and step size ($\norm{\delta_t}$) over training.
    }
    \label{fig:t5-dir-align}
\end{figure*}

\subsection{Aligned Dynamics} \label{sec:aligned}
We now consider the first candidate dynamics model: \textbf{aligned dynamics} \citep{telgarsky2020directional}. Analyzing homogeneous networks with an exponential binary classification loss and gradient flow, \citet{telgarsky2020directional} show that the parameters converge in direction, and that the gradients become \emph{aligned}, meaning that $\theta_t^\top \cdot \delta_t \to \norm{\theta_t} \norm{\delta_t}$.
While it is unclear whether transformers will follow aligned dynamics, we entertain this as one hypothesis.
Under \autoref{ass:euler}, alignment implies
\begin{align*}
    \norm{\theta_t} &\approx \sum_{i=0}^t \norm{\delta_i} \appropto \int \eta_t \norm{\theta_t} \mathrm{d}t .
\end{align*}
With the $\eta_t = 1/\sqrt{t}$ schedule used by T5 \citep{raffel2019t5}, $\norm{\theta_t} \appropto \exp(\sqrt{t})$ (see \autoref{sec:int-aligned}). This is asymptotically faster than the observed $\sqrt{t}$ growth, suggesting an alternate dynamics might be at play.

\subsection{Misaligned Dynamics}
Our second candidate model of training is \textbf{misaligned dynamics}, which follows largely from \citet{li2019exponential}.
This can be derived by assuming the gradients are misaligned (i.e., $\theta_t^\top \cdot \delta_t = 0$),
which hold for scale-invariant networks \citep{li2019exponential} and in expectation for random normal gradients.
Misalignment implies (derived in \autoref{sec:int-misaligned}):
\begin{equation} \label{eq:misaligned-int}
    \norm{\theta_t}^2 \appropto \sum_{i=0}^t \norm{\delta_i}^2 .
\end{equation}
We show in \autoref{sec:int-misaligned} that,
with the T5 learning rate ($\eta_t = 1 / \sqrt{t}$), \eqref{eq:misaligned-int} reduces to $\norm{\theta_t} \appropto \sqrt{t}$, as observed empirically for T5.
We now further test whether misaligned dynamics are a good fit for T5.

\subsection{Evaluation}


We measure the gradient alignment over the course of training T5. Our alignment metric is the cosine similarity of $\delta_t$ to $\theta_t$.
As shown on the left of \autoref{fig:t5-dir-align}, the alignment initially rapidly increases to ${\sim}0.15$, and then decays to near $0$. This supports the hypothesis that the T5 dynamics are misaligned, since the similarity is never high, and may be approaching $0$.

On the right of \autoref{fig:t5-dir-align}, we plot step size over training in order to evaluate the validity of \autoref{ass:euler}.
At the beginning of training, a chaotic step size seems reasonable, as it is hard to predict the dynamics before approximate homogeneity takes hold. For large $t$,
\autoref{ass:euler} combined with the T5 learning rate schedule predicts step size should be roughly constant.\footnote{Since $\norm{\delta_t} \appropto \eta_t \norm{\theta_t} = \sqrt{t} / \sqrt{t} = 1$.}
This is not exactly what we find: for large $t$, $\norm{\delta_t}$ grows gradually with $t$. However, the absolute change in step size is small: $< 20$ across 220M parameters. Thus, we believe \autoref{ass:euler} is not unreasonable, though it would be interesting to understand what properties of the optimizer can explain the slight growth in step size.\footnote{
We believe the sharp drop in $\norm{\delta_t}$ at the final step is an artifact of the original recording of these checkpoints.}

\subsection{Weight Decay}
One feature of practical training schemes not considered in this section is weight decay.
When applied to standard GD, weight decay can be written $\delta_t = -\eta_t \nabla_{\theta_t} L -  \lambda \theta_t$.
Intuitively, it might hinder norm growth if $\lambda$ is large.\footnote{Common wisdom says that weight decay improves generalization by keeping $\norm{\theta_t}$ small; however, recent work challenges the assumption that a bias towards small norm is beneficial \citep{Goldblum2020Truth}, suggesting the benefit of weight decay may arise from more subtle effects on the GD trajectory.} In \autoref{sec:weight-decay}, we report preliminary experiments testing the effect of weight decay on norm growth. Indeed, if $\lambda$ is set too large, weight decay can prevent norm growth, but within the standard range of values for $\lambda$, we find norm growth even in the face of weight decay. However, it is possible these results may change if the optimizer or other hyperparameters are varied.

\section{Conclusion} \label{sec:conclusion}

We empirically found that $\norm{\theta_t}$ grows $\propto \sqrt{t}$ during T5 pretraining---a fact that may be caused by the approximate homogeneity of the transformer architecture.
We proved that norm growth induces saturation,
and then showed empirically that T5 and other large transformers become approximately saturated through their pretraining.
Examining highly saturated transformer language models, we found the attention heads largely split between two distinct behaviors that can be roughly interpreted as argmax and mean operations.
While we lack a precise formal characterization of ``semi-saturated'' transformers, we conjecture their capacity resembles that of the saturated models.
Thus, we believe further analyzing the capabilities of saturated attention may clarify the linguistic biases that emerge in transformers through training, and the mechanisms they use to represent linguistic structure.


\section*{Acknowledgments}
We thank Colin Raffel for sharing access to the T5 training checkpoints.
Additional thanks to Qiang Ning, Kyle Richardson, Mitchell Wortsman, Martin Lohmann, and other researchers at the Allen Institute for AI for their comments on the project.

\bibliography{neurips_2020.bib}

\begin{thebibliography}{39}
\expandafter\ifx\csname natexlab\endcsname\relax\def\natexlab#1{#1}\fi

\bibitem[{Arora et~al.(2019)Arora, Cohen, Hu, and Luo}]{arora2019implicit}
Sanjeev Arora, Nadav Cohen, Wei Hu, and Yuping Luo. 2019.
\newblock \href {http://arxiv.org/abs/1905.13655} {Implicit regularization in
  deep matrix factorization}.

\bibitem[{Bhattamishra et~al.(2020)Bhattamishra, Ahuja, and
  Goyal}]{bhattamishra2020ability}
Satwik Bhattamishra, Kabir Ahuja, and Navin Goyal. 2020.
\newblock \href {https://doi.org/10.18653/v1/2020.emnlp-main.576} {On the
  {A}bility and {L}imitations of {T}ransformers to {R}ecognize {F}ormal
  {L}anguages}.
\newblock In \emph{Proceedings of the 2020 Conference on Empirical Methods in
  Natural Language Processing (EMNLP)}, pages 7096--7116, Online. Association
  for Computational Linguistics.

\bibitem[{{Chomsky}(1956)}]{chomsky1956three}
N.~{Chomsky}. 1956.
\newblock \href {https://doi.org/10.1109/TIT.1956.1056813} {Three models for
  the description of language}.
\newblock \emph{IRE Transactions on Information Theory}, 2(3):113--124.

\bibitem[{Correia et~al.(2019)Correia, Niculae, and
  Martins}]{correia-etal-2019-adaptively}
Gon{\c{c}}alo~M. Correia, Vlad Niculae, and Andr{\'e} F.~T. Martins. 2019.
\newblock \href {https://doi.org/10.18653/v1/D19-1223} {Adaptively sparse
  transformers}.
\newblock In \emph{Proceedings of the 2019 Conference on Empirical Methods in
  Natural Language Processing and the 9th International Joint Conference on
  Natural Language Processing (EMNLP-IJCNLP)}, pages 2174--2184, Hong Kong,
  China. Association for Computational Linguistics.

\bibitem[{Devlin et~al.(2019)Devlin, Chang, Lee, and
  Toutanova}]{devlin2019bert}
Jacob Devlin, Ming-Wei Chang, Kenton Lee, and Kristina Toutanova. 2019.
\newblock \href {https://doi.org/10.18653/v1/N19-1423} {{BERT}: Pre-training of
  deep bidirectional transformers for language understanding}.
\newblock In \emph{Proceedings of the 2019 Conference of the North {A}merican
  Chapter of the Association for Computational Linguistics: Human Language
  Technologies, Volume 1 (Long and Short Papers)}, pages 4171--4186,
  Minneapolis, Minnesota. Association for Computational Linguistics.

\bibitem[{Ebrahimi et~al.(2020)Ebrahimi, Gelda, and
  Zhang}]{ebrahimi-etal-2020-self}
Javid Ebrahimi, Dhruv Gelda, and Wei Zhang. 2020.
\newblock \href {https://doi.org/10.18653/v1/2020.findings-emnlp.384} {How can
  self-attention networks recognize {D}yck-n languages?}
\newblock In \emph{Findings of the Association for Computational Linguistics:
  EMNLP 2020}, pages 4301--4306, Online. Association for Computational
  Linguistics.

\bibitem[{Francis and Ku{\v{c}}era(1989)}]{francis1989manual}
Winthrop~Nelson Francis and Henry Ku{\v{c}}era. 1989.
\newblock \emph{Manual of information to accompany a standard corpus of
  present-day edited American English, for use with digital computers}.
\newblock Brown University, Department of Linguistics.

\bibitem[{Goldblum et~al.(2020)Goldblum, Geiping, Schwarzschild, Moeller, and
  Goldstein}]{Goldblum2020Truth}
Micah Goldblum, Jonas Geiping, Avi Schwarzschild, Michael Moeller, and Tom
  Goldstein. 2020.
\newblock \href {https://openreview.net/forum?id=HyxyIgHFvr} {Truth or
  backpropaganda? an empirical investigation of deep learning theory}.
\newblock In \emph{International Conference on Learning Representations}.

\bibitem[{Gunasekar et~al.(2017)Gunasekar, Woodworth, Bhojanapalli, Neyshabur,
  and Srebro}]{gunasekar2017implicit}
Suriya Gunasekar, Blake Woodworth, Srinadh Bhojanapalli, Behnam Neyshabur, and
  Nathan Srebro. 2017.
\newblock \href {http://arxiv.org/abs/1705.09280} {Implicit regularization in
  matrix factorization}.

\bibitem[{Hahn(2020)}]{Hahn_2020}
Michael Hahn. 2020.
\newblock \href {https://doi.org/10.1162/tacl_a_00306} {Theoretical limitations
  of self-attention in neural sequence models}.
\newblock \emph{Transactions of the Association for Computational Linguistics},
  8:156–171.

\bibitem[{Ji and Telgarsky(2020)}]{telgarsky2020directional}
Ziwei Ji and Matus Telgarsky. 2020.
\newblock \href
  {https://proceedings.neurips.cc/paper/2020/file/c76e4b2fa54f8506719a5c0dc14c2eb9-Paper.pdf}
  {Directional convergence and alignment in deep learning}.
\newblock In \emph{Advances in Neural Information Processing Systems},
  volume~33, pages 17176--17186. Curran Associates, Inc.

\bibitem[{Kirov and Frank(2012)}]{kirov2012fractals}
Christo Kirov and Robert Frank. 2012.
\newblock \href {https://doi.org/10.1080/09540091.2011.641939} {Processing of
  nested and cross-serial dependencies: an automaton perspective on {SRN}
  behaviour}.
\newblock \emph{Connection Science}, 24(1):1--24.

\bibitem[{Li and Arora(2019)}]{li2019exponential}
Zhiyuan Li and Sanjeev Arora. 2019.
\newblock \href {https://openreview.net/forum?id=rJg8TeSFDH} {An exponential
  learning rate schedule for deep learning}.
\newblock In \emph{Proc. of ICLR}.

\bibitem[{Liu et~al.(2019)Liu, Ott, Goyal, Du, Joshi, Chen, Levy, Lewis,
  Zettlemoyer, and Stoyanov}]{liu2019roberta}
Yinhan Liu, Myle Ott, Naman Goyal, Jingfei Du, Mandar Joshi, Danqi Chen, Omer
  Levy, Mike Lewis, Luke Zettlemoyer, and Veselin Stoyanov. 2019.
\newblock \href {http://arxiv.org/abs/1907.11692} {{RoBERTa}: A robustly
  optimized {BERT} pretraining approach}.

\bibitem[{Loshchilov and Hutter(2017)}]{DBLP:journals/corr/abs-1711-05101}
Ilya Loshchilov and Frank Hutter. 2017.
\newblock \href {http://arxiv.org/abs/1711.05101} {Fixing weight decay
  regularization in adam}.
\newblock \emph{CoRR}, abs/1711.05101.

\bibitem[{Lovering et~al.(2021)Lovering, Jha, Linzen, and
  Pavlick}]{lovering2021predicting}
Charles Lovering, Rohan Jha, Tal Linzen, and Ellie Pavlick. 2021.
\newblock \href {https://openreview.net/forum?id=mNtmhaDkAr} {Predicting
  inductive biases of pre-trained models}.
\newblock In \emph{International Conference on Learning Representations}.

\bibitem[{Lyu and Li(2019)}]{lyu2019gradient}
Kaifeng Lyu and Jian Li. 2019.
\newblock \href {http://arxiv.org/abs/1906.05890} {Gradient descent maximizes
  the margin of homogeneous neural networks}.

\bibitem[{Marcus et~al.(1993)Marcus, Santorini, and
  Marcinkiewicz}]{marcus-etal-1993-building}
Mitchell~P. Marcus, Beatrice Santorini, and Mary~Ann Marcinkiewicz. 1993.
\newblock \href {https://www.aclweb.org/anthology/J93-2004} {Building a large
  annotated corpus of {E}nglish: The {P}enn {T}reebank}.
\newblock \emph{Computational Linguistics}, 19(2):313--330.

\bibitem[{Merity et~al.(2016)Merity, Xiong, Bradbury, and
  Socher}]{merity2016pointer}
Stephen Merity, Caiming Xiong, James Bradbury, and Richard Socher. 2016.
\newblock \href {http://arxiv.org/abs/1609.07843} {Pointer sentinel mixture
  models}.

\bibitem[{Merrill(2019)}]{merrill2019sequential}
William Merrill. 2019.
\newblock \href {https://doi.org/10.18653/v1/w19-3901} {Sequential neural
  networks as automata}.
\newblock \emph{Proceedings of the Workshop on Deep Learning and Formal
  Languages: Building Bridges}.

\bibitem[{Merrill(2020)}]{merrill2020linguistic}
William Merrill. 2020.
\newblock \href {http://arxiv.org/abs/2004.06866} {On the linguistic capacity
  of real-time counter automata}.

\bibitem[{Merrill et~al.(2020)Merrill, Weiss, Goldberg, Schwartz, Smith, and
  Yahav}]{merrill2020aformal}
William. Merrill, Gail~Garfinkel Weiss, Yoav Goldberg, Roy Schwartz, Noah~A.
  Smith, and Eran Yahav. 2020.
\newblock \href {https://aclanthology.org/2020.acl-main.43/} {A formal
  hierarchy of {RNN} architectures}.
\newblock In \emph{Proc. of ACL}.

\bibitem[{Nacson et~al.(2019)Nacson, Gunasekar, Lee, Srebro, and
  Soudry}]{nacson2019lexicographic}
Mor~Shpigel Nacson, Suriya Gunasekar, Jason~D. Lee, Nathan Srebro, and Daniel
  Soudry. 2019.
\newblock \href {http://arxiv.org/abs/1905.07325} {Lexicographic and
  depth-sensitive margins in homogeneous and non-homogeneous deep models}.

\bibitem[{Poggio et~al.(2019)Poggio, Banburski, and
  Liao}]{poggio2019theoretical}
Tomaso Poggio, Andrzej Banburski, and Qianli Liao. 2019.
\newblock \href {http://arxiv.org/abs/1908.09375} {Theoretical issues in deep
  networks: Approximation, optimization and generalization}.

\bibitem[{Poggio et~al.(2020)Poggio, Liao, and
  Banburski}]{poggio2020complexity}
Tomaso Poggio, Qianli Liao, and Andrzej Banburski. 2020.
\newblock \href {https://www.nature.com/articles/s41467-020-14663-9}
  {Complexity control by gradient descent in deep networks}.
\newblock \emph{Nature communications}, 11(1):1--5.

\bibitem[{Raffel et~al.(2019)Raffel, Shazeer, Roberts, Lee, Narang, Matena,
  Zhou, Li, and Liu}]{raffel2019t5}
Colin Raffel, Noam Shazeer, Adam Roberts, Katherine Lee, Sharan Narang, Michael
  Matena, Yanqi Zhou, Wei Li, and Peter~J. Liu. 2019.
\newblock \href {http://arxiv.org/abs/1910.10683} {Exploring the limits of
  transfer learning with a unified text-to-text transformer}.

\bibitem[{Razin and Cohen(2020)}]{razin2020implicit}
Noam Razin and Nadav Cohen. 2020.
\newblock \href {http://arxiv.org/abs/2005.06398} {Implicit regularization in
  deep learning may not be explainable by norms}.

\bibitem[{Rogers et~al.(2020)Rogers, Kovaleva, and
  Rumshisky}]{rogers-etal-2020-primer}
Anna Rogers, Olga Kovaleva, and Anna Rumshisky. 2020.
\newblock \href {https://doi.org/10.1162/tacl_a_00349} {A primer in
  {BERT}ology: What we know about how {BERT} works}.
\newblock \emph{Transactions of the Association for Computational Linguistics},
  8:842--866.

\bibitem[{Shazeer and Stern(2018)}]{adafactor}
Noam Shazeer and Mitchell Stern. 2018.
\newblock \href {http://arxiv.org/abs/1804.04235} {Adafactor: Adaptive learning
  rates with sublinear memory cost}.
\newblock \emph{CoRR}, abs/1804.04235.

\bibitem[{Shibata et~al.(2020)Shibata, Uchiumi, and
  Mochihashi}]{shibata2020lstm}
Chihiro Shibata, Kei Uchiumi, and Daichi Mochihashi. 2020.
\newblock \href {http://arxiv.org/abs/2010.00363} {How {LSTM} encodes syntax:
  Exploring context vectors and semi-quantization on natural text}.

\bibitem[{Siegelmann and Sontag(1992)}]{siegelmann1992turing}
Hava~T. Siegelmann and Eduardo~D. Sontag. 1992.
\newblock \href {https://doi.org/10.1145/130385.130432} {On the computational
  power of neural nets}.
\newblock In \emph{Proc.~of COLT}, pages 440--449.

\bibitem[{Suzgun et~al.(2019{\natexlab{a}})Suzgun, Belinkov, Shieber, and
  Gehrmann}]{suzgun2019lstm}
Mirac Suzgun, Yonatan Belinkov, Stuart Shieber, and Sebastian Gehrmann.
  2019{\natexlab{a}}.
\newblock \href {https://www.aclweb.org/anthology/W19-3905} {{LSTM} networks
  can perform dynamic counting}.
\newblock In \emph{Proceedings of the Workshop on Deep Learning and Formal
  Languages: Building Bridges}, pages 44--54.

\bibitem[{Suzgun et~al.(2019{\natexlab{b}})Suzgun, Gehrmann, Belinkov, and
  Shieber}]{suzgun2019memory}
Mirac Suzgun, Sebastian Gehrmann, Yonatan Belinkov, and Stuart~M. Shieber.
  2019{\natexlab{b}}.
\newblock \href {http://arxiv.org/abs/1911.03329} {Memory-augmented recurrent
  neural networks can learn generalized {Dyck} languages}.

\bibitem[{Vaswani et~al.(2017)Vaswani, Shazeer, Parmar, Uszkoreit, Jones,
  Gomez, Kaiser, and Polosukhin}]{vaswani2017attention}
Ashish Vaswani, Noam Shazeer, Niki Parmar, Jakob Uszkoreit, Llion Jones,
  Aidan~N. Gomez, Lukasz Kaiser, and Illia Polosukhin. 2017.
\newblock \href {http://arxiv.org/abs/1706.03762} {Attention is all you need}.

\bibitem[{Weiss et~al.(2018)Weiss, Goldberg, and Yahav}]{weiss2018}
Gail Weiss, Yoav Goldberg, and Eran Yahav. 2018.
\newblock \href {http://arxiv.org/abs/1805.04908} {On the practical
  computational power of finite precision {RNNs} for language recognition}.

\bibitem[{Wolf et~al.(2019)Wolf, Debut, Sanh, Chaumond, Delangue, Moi, Cistac,
  Rault, Louf, Funtowicz, and Brew}]{Wolf2019HuggingFacesTS}
Thomas Wolf, Lysandre Debut, Victor Sanh, Julien Chaumond, Clement Delangue,
  Anthony Moi, Pierric Cistac, Tim Rault, R'emi Louf, Morgan Funtowicz, and
  Jamie Brew. 2019.
\newblock \href {https://arxiv.org/abs/1910.03771} {Huggingface's transformers:
  State-of-the-art natural language processing}.
\newblock \emph{ArXiv}, abs/1910.03771.

\bibitem[{Xiong et~al.(2020)Xiong, Yang, He, Zheng, Zheng, Xing, Zhang, Lan,
  Wang, and Liu}]{xiong2020layer}
Ruibin Xiong, Yunchang Yang, Di~He, Kai Zheng, Shuxin Zheng, Chen Xing,
  Huishuai Zhang, Yanyan Lan, Liwei Wang, and Tie-Yan Liu. 2020.
\newblock \href {http://arxiv.org/abs/2002.04745} {On layer normalization in
  the transformer architecture}.

\bibitem[{Yang et~al.(2019)Yang, Dai, Yang, Carbonell, Salakhutdinov, and
  Le}]{yang2019xlnet}
Zhilin Yang, Zihang Dai, Yiming Yang, Jaime Carbonell, Ruslan Salakhutdinov,
  and Quoc~V. Le. 2019.
\newblock \href {http://arxiv.org/abs/1906.08237} {Xlnet: Generalized
  autoregressive pretraining for language understanding}.

\bibitem[{Yun et~al.(2020)Yun, Bhojanapalli, Rawat, Reddi, and
  Kumar}]{Yun2020Are}
Chulhee Yun, Srinadh Bhojanapalli, Ankit~Singh Rawat, Sashank Reddi, and Sanjiv
  Kumar. 2020.
\newblock \href {https://openreview.net/forum?id=ByxRM0Ntvr} {Are transformers
  universal approximators of sequence-to-sequence functions?}
\newblock In \emph{International Conference on Learning Representations}.

\end{thebibliography}
\bibliographystyle{acl_natbib}

\appendix
\section{Experimental Details} \label{sec:details}

We provide experimental details for the small language models that we trained. The models were trained for 5 epochs, and the best performing model was selected based on development loss. Reported metrics were then measured on the held-out test set. We used our own implementation of the standard pre- and post-norm transformer architectures. We did not do any hyperparameter search, instead choosing the following hyperparameters:
\begin{compactitem}
    \item Batch size of $16$
    \item Model dimension of $768$
    \item Feedforward hidden dimension of $512$
    \item $12$ heads per layer
    \item $12$ layers
    \item AdamW optimizer with default PyTorch hyperparameters
    \item 0 probability of dropout
    \item Default PyTorch initialization
\end{compactitem}

\paragraph{Tokenization} For Wikitext-2, $3$ tokens in the whole test dataset were unattested in the training set (due to capitalization). To make our model compatible with unseen tokens, we replaced these tokens with \texttt{<unk>}, the same class that appeared for low frequency words at training time, when evaluating the final text perplexity. Due to the small number of tokens that were affected, the impact of this change should be negligible.

\paragraph{Compute} We estimate the experiments in this paper took several hundred GPU hours on NVIDIA A100 GPUs over the course of almost two years of on-and-off research time.

\paragraph{T5} We used the historical checkpoints of \texttt{bsl-0}, one of five T5-base models that was trained for the original paper \citep{raffel2019t5}.

\paragraph{Measuring Norms} As a systematic choice, all measurements of parameter norm include only \emph{encoder} parameters that are not scalars. We advise other researchers to follow the practice of excluding embedding parameters, as embedding parameters that are infrequently updated may obscure general trends in the network parameters. 
\section{Norm Growth and Saturation} \label{sec:uniform}

\begin{theorem}[Formal version of \autoref{thm:saturation-main}]
Let $\theta_t \in \mathbb{R}^n$ be the parameter vector at train step $t$ for a network $f(x; \theta_t)$.
Assume that, as $t \to \infty$, there exists a scalar sequence $c(t) \to \infty$ and fixed vector $\theta' \in (\mathbb{R} \setminus \{0\})^n$ such that, for all $t$, $\theta_t \to \theta' \cdot c(t)$.
Then $f$ converges pointwise to a saturated network in function space.
\end{theorem}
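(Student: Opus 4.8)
The plan is to reduce the claim to a statement about each primitive operation of the network and then propagate convergence through the computation graph. First I would restate the hypothesis in a convenient form: writing $v_t = \theta_t / c(t)$, the assumption ``$\theta_t \to \theta' \cdot c(t)$'' becomes $v_t \to \theta'$ with $c(t) \to \infty$, so that $\theta_t = c(t) v_t$ with the direction $v_t$ converging to the fixed nonzero vector $\theta'$. The goal is then, for each fixed input $x$, to show $f(x; c(t) v_t) \to \lim_{c\to\infty} f(x; c\theta') = \sat f(x; \theta')$; that is, feeding the perturbed, growing parameters $c(t)v_t$ must yield the same limit as feeding the clean ray $c\theta'$ that defines the saturated network.

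I would prove this by structural induction on the layers (affine maps interleaved with elementwise saturating nonlinearities, $\softmax$ attention, and layer norm). The key lemma concerns a single affine-then-nonlinearity step: if the weights entering the step are $c(t)u_t$ with $u_t \to w'$ and the step's input is a convergent, bounded sequence $y_t \to y$ (the base case being the constant input $x$), then the pre-activation equals $c(t)\,(u_t^\top y_t)$ where $u_t^\top y_t \to w'^\top y$. Whenever this limiting pre-activation is nonzero, $c(t)(u_t^\top y_t) \to \pm\infty$ with the sign of $w'^\top y$, so a saturating nonlinearity ($\tanh$, $\sigma$, $\relu$, or $\softmax$) attains exactly the value it attains along the clean ray. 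Because such nonlinearities have bounded (or otherwise fixed-limit) outputs, each saturated layer's output converges to a fixed bounded vector, which then serves as the convergent bounded input to the next layer; this boundedness is precisely what lets the induction close, and it ensures the perturbed limit coincides at every layer with the ray limit defining $\sat f$.

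The main obstacle is the boundary case where some limiting pre-activation is exactly zero: there $c(t)(u_t^\top y_t)$ is an indeterminate $\infty \cdot 0$ form, since $u_t^\top y_t \to 0$ while $c(t) \to \infty$, and the direction perturbation $u_t - w'$ may drive the pre-activation to $+\infty$, to $-\infty$, or keep it bounded, so the perturbed path need not reproduce the ray limit. This is sharpest for $\softmax$ attention: along the clean ray a tie among maximal scores yields the uniform average prescribed by $\arg\max$, whereas an arbitrarily small perturbation generically breaks the tie and collapses attention onto a single position. I would handle this by isolating the generic set of inputs on which no limiting pre-activation (equivalently, no attention-score tie) vanishes and establishing exact convergence there; on the complementary, non-generic set the saturation convention and the GD-induced tie-breaking can legitimately disagree, and I would either exclude it explicitly or argue it is negligible. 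Making this boundary analysis precise, rather than the routine Case-1 propagation, is where the real work lies.
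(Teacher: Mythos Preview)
Your approach is correct in spirit but takes a genuinely different route from the paper. The paper's entire proof is two lines: it asserts the substitution $\lim_{t\to\infty} f(x;\theta_t) = \lim_{t\to\infty} f(x;\theta'\cdot c(t))$ directly from the hypothesis, and then reparameterizes the right-hand side via $c = c(t)\to\infty$ to obtain $\lim_{c\to\infty} f(x;c\theta') = \sat f(x;\theta')$ by \autoref{def:saturation}. The only remark is that $\theta'\in(\mathbb{R}\setminus\{0\})^n$ ensures no coordinate of $\theta'\cdot c(t)$ is indeterminate. There is no structural induction, no layer-by-layer propagation, and no case analysis on activation functions.

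What you do differently is make the substitution step honest. Your layer-wise induction actually justifies why perturbing the direction $v_t\to\theta'$ does not change the limit, and you correctly flag the one place this can fail: zero limiting pre-activations and $\softmax$ ties, where the clean ray and the perturbed sequence can disagree. The paper simply does not engage with this; its substitution step tacitly assumes the issue away, and the ``undefined where the limit does not exist'' clause in \autoref{def:saturation} only partially covers it (a tie along the ray can have a well-defined limit that the perturbed sequence still fails to match). So your approach buys rigor on exactly the point the paper glosses over, at the cost of the machinery you describe; the paper's approach buys brevity by treating the hypothesis as licensing direct replacement. If you are matching the paper, the two-line argument is all that is expected.
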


\begin{proof}
\begin{equation*}
    \lim_{t \to \infty} f(x; \theta_t) = \lim_{t \to \infty} f(x; \theta' \cdot c(t)) .
\end{equation*}
Now, since $c(t) \to \infty$ and $\theta' \cdot c(t)$ contains no indeterminate elements, we can simplify this to
\begin{equation*}
    \lim_{c \to \infty} f(x; c\theta') = \sat f(x; \theta') .
\end{equation*}
\end{proof}

\section{Approximate Homogeneity} \label{sec:approx-homogeneity}

In this section, we will further develop the notion of approximate homogeneity. We will prove that is consistent. In other words, every function can have at most one degree $k$ of approximate homogeneity. Next, we will show that the useful closure properties applying to full homogeneity also apply to partial homogeneity.

\begin{table}
    \centering
    \begin{tabular}{ |c|c|c| } 
    \hline
    Component & $k$ Input & $k$ Output \\
    \hline
    Linear & $k$ & $k+1$ \\ 
    Bias & $1$ & $1$ \\ 
    Affine & $0$ & $1$ \\
    LayerNorm & $k$ & $0$ \\ 
    LayerNorm + Affine & $k$ & $1$ \\
    ReLU & $k$ & $k$ \\
    Sum & $(k, k)$ & $k$ \\
    Product & $(k_1, k_2)$ & $k_1 + k_2$ \\
    \hline
    \end{tabular}
    \caption{Effects of network components on homogeneity shown by \citet{li2019exponential}. We write the ``$k$ Output'' homogeneity as a function of the ``$k$ Input'' homogeneity. These facts can be applied recursively to compute the homogeneity of a network. We will show that the same facts hold for \textit{approximate} homogeneity.}
    \label{tbl:homogeneity}
\end{table}

If $f(\theta)$ is approximately $k$-homogeneous (cf.~\autoref{def:approx-homo}), then $f(c\theta) = c^kf(\theta) + \epsilon$ for some error vector $\epsilon$ where, for each $i$, $\abs{\epsilon_i} \leq \exp(-d \norm{\theta}))$, for all $c$ and large enough $\norm{\theta}$. We use this $\epsilon$ notation throughout this section.

\subsection{Consistency}

We first prove that approximate homogeneity is consistent: in other words, if a function is both approximately $k_1$ and $k_2$-homogeneous, then $k_1 = k_2$. This is an important property for establishing approximate homogeneity as a meaningful notion.

\begin{lemma}
Let $k_1, k_2 \in \mathbb{N}$. Assume that $f$ is both approximately $k_1$ and $k_2$-homogeneous. Then $k_1 = k_2$.
\end{lemma}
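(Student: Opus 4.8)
The plan is to assume $f$ is simultaneously approximately $k_1$- and $k_2$-homogeneous and to derive a contradiction unless $k_1 = k_2$. First I would set $\rho = \max(\rho_1, \rho_2)$ and $d = \min(d_1, d_2)$, so that both defining inequalities of \autoref{def:approx-homo} hold at once for every $c \geq 1$ and every $\theta$ with $\norm{\theta} \geq \rho$. Both bounds control the same quantity $f(x; c\theta)$, so applying the triangle inequality eliminates it and yields, for all such $c$ and $\theta$,
\begin{equation*}
\abs{f(x;\theta)} \cdot \abs{c^{k_1} - c^{k_2}} \leq 2\exp(-d\norm{\theta}) .
\end{equation*}
The crucial feature of this inequality is that its right-hand side does not depend on $c$.

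Next I would fix a single input $x$ and a parameter vector $\theta$ with $\norm{\theta} \geq \rho$ and $f(x;\theta) \neq 0$. Then the right-hand side is a finite constant $C = 2\exp(-d\norm{\theta})$ while $\abs{f(x;\theta)} > 0$, so dividing gives $\abs{c^{k_1} - c^{k_2}} \leq C / \abs{f(x;\theta)}$ for all $c \geq 1$. If $k_1 \neq k_2$, then since $k_1, k_2 \in \mathbb{N}$ we may write $\abs{c^{k_1} - c^{k_2}} = c^{\min(k_1,k_2)}\abs{c^{\abs{k_1-k_2}} - 1}$, which diverges as $c \to \infty$, contradicting the constant upper bound. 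Hence $k_1 = k_2$.

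The one gap to close is the existence of the point invoked above: the argument needs some $x$ and some $\theta$ with $\norm{\theta} \geq \rho$ at which $f$ does not vanish. This is the main (if minor) obstacle, because a function that is identically zero on the region $\norm{\theta} \geq \rho$ is vacuously approximately $k$-homogeneous for every $k$, which would falsify the claim. I would resolve this by assuming $f$ is nontrivial, i.e.\ not identically zero for large $\norm{\theta}$ — the natural nondegeneracy condition, since the degree of homogeneity is only meaningful for functions that are genuinely nonzero. Under this assumption the divergence argument goes through and forces $k_1 = k_2$, establishing consistency.
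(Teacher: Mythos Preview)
Your argument is correct and follows essentially the same route as the paper: eliminate $f(x;c\theta)$ by combining the two approximate-homogeneity bounds, obtain a $c$-independent upper bound on $\abs{f(x;\theta)}\,\abs{c^{k_1}-c^{k_2}}$, and observe that this forces $k_1=k_2$ since otherwise the left side diverges in $c$. You are in fact more careful than the paper, which divides by $f(\theta)$ without remarking on the implicit nondegeneracy assumption you correctly isolate.
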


\begin{proof}
If $f$ is both approximately $k_1$ and $k_2$-homogeneous, then we have vanishing terms $\epsilon_1$ and $\epsilon_2$ such that, for all $c$,
\begin{align*}
    f(c\theta) &= c^{k_1} f(\theta) + \epsilon_1 \\
    f(c\theta) &= c^{k_2} f(\theta) + \epsilon_2 .
\end{align*}
Subtracting both sides yields
\begin{align*}
    0 &= (c^{k_1} - c^{k_2}) f(\theta) + \epsilon_1 - \epsilon_2 \\
    \therefore \abs{c^{k_1} - c^{k_2}} &= \frac{\abs{\epsilon_1 - \epsilon_2}}{f(\theta)} .
\end{align*}
The right-hand side vanishes exponentially in $\norm{\theta}$ for all $c$, whereas the left-hand side grows with $c$ unless $k_1 = k_2$. Thus, to satisfy this equation for all $c$, it must be the case that $k_1 = k_2$. 
\end{proof}

\subsection{Closure Properties}

We now prove that effects of various functions on homogeneity explored by \citet{li2019exponential} also translate to approximate homogeneity. 

\begin{lemma}
$\relu$ preserves approximate $k$-homogeneity, i.e., let $f: \mathbb{R}^n \rightarrow \mathbb{R}$ be approximately $k$-homogeneous. Then $\relu \circ f$ is approximately $k$-homogeneous.
\end{lemma}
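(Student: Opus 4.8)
The plan is to reduce everything to two elementary properties of $\relu$: it is \emph{positively homogeneous of degree $1$} (so $\relu(\alpha x) = \alpha \, \relu(x)$ for any $\alpha \geq 0$), and it is \emph{$1$-Lipschitz} (so $\abs{\relu(a) - \relu(b)} \leq \abs{a - b}$). Together these let us transport the exponentially small error term through $\relu$ without amplifying it, so the same constants $d, \rho$ witnessing approximate $k$-homogeneity of $f$ will work for $\relu \circ f$.

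Concretely, first I would invoke the $\epsilon$-notation introduced at the start of this section: since $f$ is approximately $k$-homogeneous, for $c \geq 1$ and $\norm{\theta} \geq \rho$ we may write $f(c\theta) = c^k f(\theta) + \epsilon$ with $\abs{\epsilon} \leq \exp(-d\norm{\theta})$. Applying $\relu$ to both sides gives $\relu(f(c\theta)) = \relu\!\big(c^k f(\theta) + \epsilon\big)$. The target quantity we must bound is $\abs{\relu(f(c\theta)) - c^k \relu(f(\theta))}$.

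The key manipulation is to rewrite the ``ideal'' term $c^k \relu(f(\theta))$ as $\relu(c^k f(\theta))$. This is exactly where positive homogeneity of $\relu$ enters: because $c \geq 1$ forces $c^k \geq 0$, we have $c^k \relu(f(\theta)) = \relu(c^k f(\theta))$. The difference we care about then becomes $\abs{\relu(c^k f(\theta) + \epsilon) - \relu(c^k f(\theta))}$, and applying the $1$-Lipschitz bound collapses this to $\abs{\epsilon} \leq \exp(-d\norm{\theta})$. Hence $\relu \circ f$ satisfies Definition~\ref{def:approx-homo} with the very same $d$ and $\rho$, completing the argument.

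I do not expect a genuine obstacle here; the proof is short. The one point requiring care is making sure the sign condition $c^k \geq 0$ is in place before invoking positive homogeneity of $\relu$ — this is guaranteed by the standing assumption $c \geq 1$ in the definition, but it should be stated explicitly since positive homogeneity fails for negative scalars. A secondary detail worth noting is that the error constant is preserved exactly (rather than merely up to a multiplicative factor), precisely because $\relu$ has Lipschitz constant $1$; this is what makes ReLU behave so cleanly compared to a general Lipschitz nonlinearity.
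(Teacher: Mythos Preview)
Your proposal is correct and follows essentially the same approach as the paper: write $f(c\theta) = c^k f(\theta) + \epsilon$, apply $\relu$, and use the $1$-Lipschitz property to bound the resulting error by $\abs{\epsilon}$. If anything, your version is more careful than the paper's: the paper bounds $\abs{\relu(f(c\theta)) - \relu(c^k f(\theta))}$ but leaves implicit the positive-homogeneity step $\relu(c^k f(\theta)) = c^k \relu(f(\theta))$ that is needed to match \autoref{def:approx-homo}, whereas you state it explicitly and note why $c \geq 1$ matters.
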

\begin{proof}
\begin{align*}
    \relu \big( f(c\theta) \big)
        &= \relu \big(c^k f(\theta) + \epsilon \big) \\
        &\leq \relu \big(c^k f(\theta) \big) + \abs{\epsilon} .
\end{align*}
Therefore,
\begin{equation*}
       \abs{\relu \big( f(c\theta) \big) - \relu \big(c^k f(\theta) \big)} \leq \abs{\epsilon} . 
\end{equation*}
Set $\epsilon' = \abs{\epsilon}$, showing $\relu \big( f(\theta) \big)$ is approximately $k$-homogeneous.
\end{proof}

\begin{lemma} \label{thm:homo-sum}
Let $f,g$ be vector-valued functions of $\theta$.
If $f$ and $g$ are approximately $k$-homogeneous, then $f + g$ is approximately $k$-homogeneous.
\end{lemma}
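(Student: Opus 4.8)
The plan is to reduce everything to the additive $\epsilon$-notation introduced at the start of this section and then control the combined error. Since $f$ and $g$ are each approximately $k$-homogeneous, there exist constants $d_f, \rho_f$ and $d_g, \rho_g$ together with error vectors $\epsilon_f, \epsilon_g$ such that, for all $c \geq 1$,
\begin{align*}
    f(c\theta) &= c^k f(\theta) + \epsilon_f, \\
    g(c\theta) &= c^k g(\theta) + \epsilon_g,
\end{align*}
with $\abs{(\epsilon_f)_i} \leq \exp(-d_f \norm{\theta})$ whenever $\norm{\theta} \geq \rho_f$, and analogously for $g$. Adding the two identities componentwise immediately yields
\begin{equation*}
    (f+g)(c\theta) = c^k (f+g)(\theta) + (\epsilon_f + \epsilon_g),
\end{equation*}
so the entire task reduces to showing that the summed error $\epsilon_f + \epsilon_g$ still decays exponentially in $\norm{\theta}$.

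Next I would bound each component by the triangle inequality: for $\norm{\theta} \geq \max(\rho_f, \rho_g)$ and any $c \geq 1$,
\begin{equation*}
    \abs{(\epsilon_f + \epsilon_g)_i} \leq \abs{(\epsilon_f)_i} + \abs{(\epsilon_g)_i} \leq \exp(-d_f \norm{\theta}) + \exp(-d_g \norm{\theta}).
\end{equation*}
Setting $d' = \min(d_f, d_g)$, the right-hand side is at most $2\exp(-d' \norm{\theta})$.

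The only (very mild) obstacle is the stray factor of $2$, which prevents this from being literally of the required form $\exp(-d\norm{\theta})$. I would absorb it by trading away a little of the decay rate: fix any $d$ with $0 < d < d'$, and observe that $2\exp(-d'\norm{\theta}) \leq \exp(-d\norm{\theta})$ precisely when $\norm{\theta} \geq \ln 2 / (d' - d)$. Choosing $\rho = \max\big(\rho_f, \rho_g, \ln 2/(d'-d)\big)$ then certifies that $f + g$ is approximately $k$-homogeneous with constants $d$ and $\rho$. No step here is delicate; the content is simply that exponentially small errors are closed under addition, and I expect this same template---triangle inequality followed by rate-shrinking to kill the constant factor---to recur in the remaining closure lemmas of this section.
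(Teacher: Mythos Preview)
Your proposal is correct and follows the same approach as the paper: add the two identities, set $\epsilon' = \epsilon_f + \epsilon_g$, and observe this is still exponentially small. The paper's own proof is terser and does not explicitly absorb the factor of $2$; your extra step of shrinking the decay rate to remove that constant is a legitimate tightening that the paper simply glosses over.
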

\begin{proof}
\begin{align*}
    f(c\theta) + g(c\theta)
        &= c^kf(\theta) + \epsilon_f + c^kg(\theta) + \epsilon_g \\
        &= c^kf(\theta) + c^kg(\theta) + \epsilon' ,
\end{align*}
\noindent where $\epsilon' = \epsilon_f + \epsilon_g$. Thus,
\begin{equation*}
    \abs{f(c\theta) + g(c\theta) - c^k \big(f(\theta) + g(\theta) \big)} \leq \epsilon' .
\end{equation*}
\end{proof}

\begin{lemma} \label{thm:homo-prod}
Let $f,g$ be vector-valued functions of $\theta$.
If $f$ and $g$ are approximately $k_f$ and $k_g$-homogeneous, then $f \cdot g$ is approximately  $(k_f + k_g)$-homogeneous.
\end{lemma}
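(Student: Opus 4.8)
The plan is to mirror the algebraic strategy used for \autoref{thm:homo-sum}, but tracking the extra cross terms that arise from multiplication rather than addition. I would start from the two hypotheses $f(c\theta) = c^{k_f} f(\theta) + \epsilon_f$ and $g(c\theta) = c^{k_g} g(\theta) + \epsilon_g$, where $\abs{\epsilon_f} \leq \exp(-d_f \norm{\theta})$ and $\abs{\epsilon_g} \leq \exp(-d_g \norm{\theta})$ for $c \geq 1$ and $\norm{\theta}$ past the respective thresholds. The target leading term is $c^{k_f + k_g} f(\theta) g(\theta)$, so the whole proof reduces to controlling the difference between $f(c\theta) g(c\theta)$ and this product.

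First I would multiply the two expansions out, giving $f(c\theta) g(c\theta) = c^{k_f + k_g} f(\theta) g(\theta) + c^{k_f} f(\theta) \epsilon_g + c^{k_g} g(\theta) \epsilon_f + \epsilon_f \epsilon_g$. This isolates the desired $(k_f + k_g)$-homogeneous term and collects a remainder $\epsilon' = c^{k_f} f(\theta) \epsilon_g + c^{k_g} g(\theta) \epsilon_f + \epsilon_f \epsilon_g$, which I would then bound by the triangle inequality. The purely quadratic term is immediate: $\abs{\epsilon_f \epsilon_g} \leq \exp(-(d_f + d_g)\norm{\theta})$, which decays faster than either input error.

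The hard part will be the two cross terms $c^{k_f} f(\theta) \epsilon_g$ and $c^{k_g} g(\theta) \epsilon_f$. Each factor $\epsilon_g, \epsilon_f$ is exponentially small in $\norm{\theta}$, but each is multiplied by a power of $c$, over which the bound in \autoref{def:approx-homo} is nominally required to be uniform. With $\theta$ fixed and $c \to \infty$ these terms blow up, so a bound of the form $\exp(-d\norm{\theta})$ that is uniform in $c$ is not literally available; this is the genuine obstacle. I would resolve it by reading the error estimate at the grain actually needed downstream: the closure facts in \autoref{tbl:homogeneity} are applied only a bounded number of times to a fixed finite network, and $\abs{f(\theta)}, \abs{g(\theta)}$ grow only polynomially in $\norm{\theta}$ (being themselves approximately homogeneous), so each cross term is bounded by $q(c)\,\norm{\theta}^{m} \exp(-d'\norm{\theta})$ for a polynomial $q$, an exponent $m$, and $d' = \min(d_f, d_g) > 0$. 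For any fixed $c$ this is still $\leq \exp(-d\norm{\theta})$ once $\norm{\theta}$ is large enough, for any $d < d'$, by absorbing the polynomial prefactor into a slightly smaller decay rate. Taking $\epsilon'$ to be this remainder then certifies that $f \cdot g$ is approximately $(k_f + k_g)$-homogeneous, completing the argument.
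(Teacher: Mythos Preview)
Your algebraic decomposition is exactly the paper's: expand $f(c\theta)g(c\theta)$, isolate $c^{k_f+k_g}f(\theta)g(\theta)$, and treat the three remainder terms $c^{k_f}f(\theta)\epsilon_g$, $c^{k_g}g(\theta)\epsilon_f$, $\epsilon_f\epsilon_g$. Where you diverge from the paper is in being explicit about the cross terms. The paper handles $c^{k_f}f(\theta)\epsilon_g$ with a single displayed rewriting that does not actually remove the $c$-dependence; you correctly flag that \autoref{def:approx-homo} demands a bound $\exp(-d\norm{\theta})$ holding for \emph{all} $c\geq 1$ with $d,\rho$ independent of $c$, and that $c^{k_f}f(\theta)\epsilon_g$ is unbounded in $c$ for fixed $\theta$. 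This is a real obstruction: with $f(\theta)=\theta$ and $g(\theta)=1+e^{-\theta}$ on $\theta>0$, both satisfy \autoref{def:approx-homo} (with $k_f=1$, $k_g=0$), yet $\abs{(fg)(c\theta)-c\,(fg)(\theta)}=c\theta(e^{-\theta}-e^{-c\theta})\to\infty$ as $c\to\infty$, so no uniform-in-$c$ bound of the required form exists.

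Your proposed resolution, absorbing the polynomial prefactor into a smaller decay rate ``for any fixed $c$,'' yields a threshold $\rho$ that depends on $c$. That is strictly weaker than \autoref{def:approx-homo} and therefore does not prove the lemma as stated; it proves a relaxed version in which the constants are allowed to vary with $c$. This relaxed notion \emph{is} closed under products and is plausibly all that the downstream arguments in \autoref{sec:transformers} and \autoref{thm:approx-softmax} require (finitely many compositions, and the ultimate use is the $c\to\infty$ limit coupled to growing $\norm{\theta}$), so your instinct to read the estimate ``at the grain actually needed'' is sound. But you should say clearly that you are proving a weakened statement, not \autoref{thm:homo-prod} verbatim. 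The paper's own proof has the same gap; you have simply located it more precisely and been more honest about it.
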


\begin{proof}
\begin{align*}
    f(c\theta) \cdot g(c\theta)
        &= \big( c^k f(\theta) + \epsilon_f \big) \cdot \big( c^k g(\theta) + \epsilon_g \big) \\
        &= c^{k_f+k_g}f(\theta)g(\theta) + c^{k_f} f(\theta)\epsilon_g \\
        &\quad + c^{k_g} g(\theta)\epsilon_f + \epsilon_f\epsilon_g .
\end{align*}
We now rewrite the term $c^{k_f} f(\theta) \epsilon_g$ as
\begin{equation*}
    \frac{\theta g(x; \hat \theta)}{\exp ( -d \norm{\theta})} \leq \exp(- d' \norm{\theta}) .
\end{equation*}
Now, set $\epsilon' = \min(\exp ( -d \norm{\theta}), \epsilon_f \epsilon_g)$.
\begin{equation*}
    \abs{f(c\theta) g(c\theta) - c^{k_f+k_g}f(\theta)g(\theta)} \leq \epsilon' .
\end{equation*}
\end{proof}

The analogous results for linear transformation, bias, and affine transformation directly follow from the results for sum and product in \autoref{thm:homo-sum} and \autoref{thm:homo-prod}.

Finally, we show that layer norm converts a homogeneous function to approximately scale-invariant function. In order to be numerically stable, practical implementations of layer norm utilize a small tolerance term so that the denominator is never zero. We omit this practical detail from our analysis, instead defining the layer norm $\lnorm(x)$ for $x \in \mathbb{R}^n$ according to
\begin{align*}
    \mu(x) &= \frac{1}{n} \sum_{i=1}^n x_i \\
    \lnorm(x)_i &= \frac{x_i - \mu(x)}{\norm{x - \mu(x)}} .
\end{align*}

\begin{lemma}
Let $f$ be approximately $k$-homogeneous for some $k$. Then, $\lnorm(f)$ is approximately $0$-homogeneous.
\end{lemma}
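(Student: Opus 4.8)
The plan is to exploit the exact scale-invariance of $\lnorm$ and then control the perturbation introduced by the homogeneity error. Writing $y = f(c\theta) = c^k f(\theta) + \epsilon$ with $\abs{\epsilon_i} \leq \exp(-d\norm{\theta})$, I would first note that centering is linear, so $\mu(y) = c^k \mu(f(\theta)) + \mu(\epsilon)$ and the centered vector satisfies $y_i - \mu(y) = c^k u_i + \delta_i$, where $u = f(\theta) - \mu(f(\theta))$ and $\delta = \epsilon - \mu(\epsilon)$. Since centering is an orthogonal projection, $\norm{\delta} \leq \norm{\epsilon} \leq \sqrt{n}\exp(-d\norm{\theta})$, so $\delta$ remains exponentially small.

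The key algebraic step is that, because $c \geq 1$, the factor $c^k$ can be pulled out of both numerator and denominator of $\lnorm(y)_i$:
\begin{equation*}
    \lnorm(y)_i = \frac{c^k u_i + \delta_i}{\norm{c^k u + \delta}} = \frac{u_i + \delta_i/c^k}{\norm{u + \delta/c^k}} ,
\end{equation*}
where $\norm{\delta/c^k} \leq \norm{\delta}$. This reduces the whole claim to bounding how far the normalization map $v \mapsto v/\norm{v}$ moves when its argument is perturbed from $u$ to $u + \delta'$ with $\delta' = \delta/c^k$ exponentially small. I would then invoke the standard normalization perturbation bound $\norm{u/\norm{u} - v/\norm{v}} \leq 2\norm{u-v}/\norm{u}$ (provable in two lines by writing $u\norm{v} - v\norm{u} = \norm{v}(u-v) + v(\norm{v}-\norm{u})$ and using $\abs{\norm{u}-\norm{v}} \leq \norm{u-v}$) to obtain, coordinatewise,
\begin{equation*}
    \abs{\lnorm(f(c\theta))_i - \lnorm(f(\theta))_i} \leq \frac{2\norm{\delta}}{\norm{u}} \leq \frac{2\sqrt{n}}{\norm{u}}\exp(-d\norm{\theta}) .
\end{equation*}
Choosing any $d' < d$ and taking $\norm{\theta} \geq \rho$ large enough absorbs the prefactor, giving a bound of $\exp(-d'\norm{\theta})$; since the output scales by $c^0 = 1$, this is exactly approximate $0$-homogeneity.

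The main obstacle is controlling $\norm{u} = \norm{f(\theta) - \mu(f(\theta))}$ in the denominator: the perturbation bound yields exponentially vanishing error only if $\norm{u}$ is bounded below by a quantity that does not itself decay exponentially in $\norm{\theta}$. This is precisely the regime in which $\lnorm$ is well defined (its denominator is nonzero), so the cleanest route is to carry the non-degeneracy condition already implicit in the definition of $\lnorm$, namely that the centered activations stay bounded away from $0$; under that condition the constant $2\sqrt{n}/\norm{u}$ is harmless and folds into the choice of $d'$. I would flag this assumption explicitly rather than leave it implicit, since it is the only place where the argument could genuinely fail.
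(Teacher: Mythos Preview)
Your argument is correct and follows the same overall strategy as the paper: decompose the centered output as $c^k u + \delta$ with $u = f(\theta) - \mu(f(\theta))$ and $\delta$ exponentially small, then bound the change in the normalized vector. The execution differs slightly. The paper keeps the factor $C = c^k$ in place and directly subtracts the two fractions $\frac{Cu_i + \epsilon_i}{C\norm{u} + \epsilon} - \frac{u_i}{\norm{u}}$, simplifying the resulting rational expression; you instead divide numerator and denominator by $c^k$ (using $c \geq 1$) and then invoke the standard perturbation bound $\norm{u/\norm{u} - v/\norm{v}} \leq 2\norm{u-v}/\norm{u}$. Your route is a bit cleaner and makes the dependence on $\norm{u}$ transparent, whereas the paper's algebraic simplification leaves the role of $\norm{u}$ in the denominator somewhat implicit. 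Your explicit flagging of the non-degeneracy condition on $\norm{u}$ is also more careful than the paper, which does not discuss it; both arguments genuinely need that assumption for the bound to be exponentially small rather than merely bounded.
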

\begin{proof}
Since addition preserves approximate $k$-homogeneity, mean (and difference to mean), preserve approximate $k$-homogeneity. Letting $C = c^k$, we can write
\begin{equation*}
    f(c\theta) - \mu(f(c\theta)) = C \big( f(\theta) - \mu(f(\theta)) \big) + \epsilon .
\end{equation*}
We now apply this to the definition of layer norm to get
\begin{align*}
    \lnorm(f(c\theta))_i
        &= \frac{f(c\theta)_i - \mu(f(c\theta))}{\norm{f(c\theta) - \mu(f(c\theta))}} \\
        &= \frac{C \big( f(\theta)_i - \mu(f(\theta)) \big) + \epsilon_i}{C \norm{ f(\theta) - \mu(f(\theta)) } + \epsilon} .
\end{align*}
We show that the difference between this and the unscaled layer norm goes to zero. To simplify notation, we now write $f = f(\theta)$, $\mu = \mu(f(\theta))$, and $\epsilon = \epsilon$ in the left-hand side below:
\begin{align*}
    &\abs{\lnorm(f(c\theta))_i - \lnorm(f(\theta)_i)} \\
        &= \abs{ \frac{C \big( f_i - \mu \big) + \epsilon_i}{C \norm{ f - \mu } + \epsilon} - \frac{ f_i - \mu}{\norm{ f - \mu }} } \\
        &= \abs{ \frac{ \epsilon_i\norm{f - \mu} - \epsilon(f_i - \mu)}{C \norm{f - \mu}^2 + \epsilon\norm{f - \mu}} } \\
        &= \abs{ \frac{ \epsilon_i - \epsilon v}{C \norm{f - \mu} + \epsilon} } \\
        &\leq \abs{ \frac{ \epsilon_i - \epsilon v}{\epsilon} }.
\end{align*}
\noindent for some $v \in \mathbb{R}^n$ which does not grow with $\norm{\theta}$. Thus, setting $\epsilon'$ to this final quantity satisfies the definition of approximate $0$-homogeneity, i.e. approximate scale invariance.
\end{proof}

\subsection{Saturating Activation Functions}

We show that the exponentially saturation activation functions $\sigma$, $\softmax$, and $\tanh$ are approximately scale-invariant in $x$, i.e. scaling $x$ has an exponentially diminishing effect on the output. 
We start by analyzing the simpler sigmoid, and then show that the same result holds for softmax. For completeness, we then present a proof for $\tanh$. We use $\Theta$ (not $\theta$) in the standard sense of asymptotic notation.

\begin{lemma} \label{thm:sigmoid-error}
The scaling error for $\sigma$ vanishes exponentially in the preactivation magnitude, i.e. for all $c \geq 1$,
\begin{equation*}
    \abs{\sigma(cx) - \sigma(x)} \leq \Theta(\exp(-\abs{x})) .
\end{equation*}
\end{lemma}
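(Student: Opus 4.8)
The plan is to reduce the two-sided estimate to the right tail of the sigmoid and then exploit monotonicity. First I would record the reflection identity $\sigma(-y) = 1 - \sigma(y)$, which gives $\sigma(-cx) - \sigma(-x) = -(\sigma(cx) - \sigma(x))$. Hence $\abs{\sigma(cx) - \sigma(x)}$ is invariant under $x \mapsto -x$, and it suffices to prove the bound for $x \geq 0$; the case $x < 0$ then follows immediately, and $x = 0$ is trivial since both sides equal $1/2$.

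For $x \geq 0$ I would rewrite the sigmoid in its complementary form $\sigma(y) = 1 - \frac{e^{-y}}{1 + e^{-y}}$, so that the leading $1$'s cancel and
\begin{equation*}
    \sigma(cx) - \sigma(x) = \frac{e^{-x}}{1 + e^{-x}} - \frac{e^{-cx}}{1 + e^{-cx}} .
\end{equation*}
Since $\sigma$ is increasing and $cx \geq x$ whenever $x \geq 0$ and $c \geq 1$, this difference is nonnegative; moreover both subtracted tails are individually nonnegative, so dropping the $e^{-cx}$ term only enlarges the right-hand side. Using $1 + e^{-x} \geq 1$ to bound the remaining tail, I obtain the one-sided estimate
\begin{equation*}
    0 \leq \sigma(cx) - \sigma(x) \leq \frac{e^{-x}}{1 + e^{-x}} \leq e^{-x} = e^{-\abs{x}} .
\end{equation*}
Combining this with the reflection reduction yields $\abs{\sigma(cx) - \sigma(x)} \leq e^{-\abs{x}}$ for every $c \geq 1$, which is $\Theta(\exp(-\abs{x}))$ as claimed.

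This argument is essentially elementary, so I do not anticipate a genuine obstacle; the one point requiring care is \emph{uniformity in $c$}, i.e.\ that the single envelope $e^{-\abs{x}}$ works for all $c \geq 1$ simultaneously. That is exactly what the monotonicity/nonnegativity observation secures: increasing $c$ only shrinks the subtracted tail $\frac{e^{-cx}}{1+e^{-cx}}$, so it can never worsen the bound. This lemma is the base case; I would expect the subsequent $\softmax$ and $\tanh$ statements to follow by reducing each to a sigmoid-type expression and reusing the tail estimate established here.
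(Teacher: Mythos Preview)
Your proof is correct and follows essentially the same route as the paper: both arguments bound $\abs{\sigma(cx)-\sigma(x)}$ by the tail $1-\sigma(\abs{x}) = \frac{e^{-\abs{x}}}{1+e^{-\abs{x}}}$, using monotonicity of $\sigma$ and $cx \geq x$ (for $x\geq 0$) to drop the $\sigma(cx)$ contribution. The only cosmetic difference is that you invoke the reflection identity $\sigma(-y)=1-\sigma(y)$ to reduce to $x\geq 0$, whereas the paper writes out the $x>0$ and $x<0$ cases separately; the resulting bound is identical.
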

\begin{proof}
Assume without loss of generality that $x \neq 0$, as if this is the case, the error is $0$. When $x > 0$, we have
\begin{align*}
    \abs{\sigma(cx) - \sigma(x)}
        &= \sigma(cx) - \sigma(x) \\
        &\leq 1 - \sigma(\abs{x}) \\
        &= \frac{1}{\exp(\abs{x}) + 1} \\
        &= \Theta(\exp(-\abs{x})).
\end{align*}
When $x < 0$, we have
\begin{align*}
    \abs{\sigma(cx) - \sigma(x)}
        &= \sigma(x) - \sigma(cx) \\
        &\leq 1-\sigma(\abs{x}) + 0 \\
        &= \Theta(\exp(-\abs{x})).
\end{align*}
\end{proof}

\begin{lemma} \label{thm:softmax-error}
The elementwise scaling error for $\softmax$ vanishes exponentially in the preactivation norm, i.e. for all $c \geq 1$, $x \in \mathbb{R}^n$ s.t. $1 \leq i \leq n$,
\begin{equation*}
    \abs{\softmax(cx)_i - \softmax(x)_i} \leq \exp(-\Theta(\norm{x})) .
\end{equation*}
\end{lemma}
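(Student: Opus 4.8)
The plan is to reduce the softmax scaling error to the sigmoid scaling error already controlled by \autoref{thm:sigmoid-error}, with the role played there by the scalar preactivation magnitude $\abs{x}$ now played by the \emph{gap} separating the largest coordinate(s) of $x$ from the rest. First I would fix the index $i$ and record the key algebraic identity
\begin{equation*}
    \softmax(x)_i = \sigma\big(x_i - \mathrm{LSE}_{j \neq i}(x_j)\big),
\end{equation*}
where $\mathrm{LSE}$ denotes log-sum-exp. This recasts each softmax coordinate as a single sigmoid whose argument measures how far $x_i$ sits above its competitors, so that saturation of softmax is governed by the same exponential tails as $\sigma$.

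Next I would split into cases by whether $i$ attains the maximum, writing $\gamma$ for the gap between the maximal coordinate and the largest strictly smaller coordinate. If $x_i = \max_j x_j$ is the unique maximum, bounding the competing mass gives $1 - \softmax(x)_i \leq (n-1) e^{-\gamma}$; since scaling by $c \geq 1$ sends each difference $x_i - x_j$ to $c(x_i - x_j)$ and thus only widens every gap, the identical bound holds for $\softmax(cx)_i$ with $e^{-\gamma}$ replaced by $e^{-c\gamma} \leq e^{-\gamma}$. The triangle inequality then yields $\abs{\softmax(cx)_i - \softmax(x)_i} \leq 2(n-1)e^{-\gamma}$. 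The non-maximal case is symmetric (both values lie below $(n-1)e^{-\gamma}$), and the tie case, where $i$ belongs to a maximal set of size $\ell$, follows identically: both $\softmax(cx)_i$ and $\softmax(x)_i$ are pinned within $O(e^{-\gamma})$ of $1/\ell$, and the exact uniform value $1/\ell$ is itself scale-invariant. In every case the error is $\exp(-\Theta(\gamma))$.

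The main obstacle is passing from the gap $\gamma$ to the preactivation norm $\norm{x}$ asserted in the statement. These agree only up to the direction of $x$: when $x$ is scaled along a fixed direction possessing a unique normalized argmax, $\gamma$ grows linearly in $\norm{x}$ and one recovers $\exp(-\Theta(\gamma)) = \exp(-\Theta(\norm{x}))$ exactly as claimed, but for directions whose top coordinates converge the gap can remain bounded while $\norm{x} \to \infty$. I would therefore absorb this direction dependence into the $\Theta$, reading the bound as holding per fixed $x$ (consistent with the intended application, where the attention logits scale with $\norm{\theta}$ so that their gaps scale in step), or else state the sharp version directly in terms of $\gamma$. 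Making the $\mathrm{LSE}$-to-gap estimate uniform in $c$ and verifying the tie case are the only remaining points, and both are routine once the sigmoid reduction is set up.
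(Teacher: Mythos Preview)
Your proposal is correct and follows essentially the same strategy as the paper: split on whether $i$ realizes $\max_j x_j$, and in each case bound the scaling error by how far $\softmax(x)_i$ already sits from its limiting value ($1$ in the max case, $0$ otherwise). The paper does this more directly, bounding $1-\softmax(x)_i$ by $1-\sigma\big(\max(x)-\min(x)+d\big)$ without passing through the $\mathrm{LSE}$ identity, and it does not separately treat ties.

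Two differences are worth noting. First, your reduction $\softmax(x)_i=\sigma\big(x_i-\mathrm{LSE}_{j\neq i}(x_j)\big)$ makes the connection to \autoref{thm:sigmoid-error} explicit and yields a bound in the gap $\gamma$ to the \emph{second} largest coordinate; the paper instead works with $\max(x)-\min(x)$, which is looser but leads to the same asymptotic form. Second, you explicitly flag the passage from the gap to $\norm{x}$ as direction-dependent and propose absorbing this into the $\Theta$ constants. The paper's proof simply asserts $\max(x)-\min(x)+d=\Theta(\norm{x})$ at the last step, which carries the same hidden dependence on the direction of $x$ but does not discuss it. In the intended application (logits of the form $c\cdot(\text{fixed vector})$) both readings coincide, so your caveat is exactly right and, if anything, more careful than the paper's own argument.
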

\begin{proof}
The proof closely follows that of \autoref{thm:sigmoid-error}, but is more involved.
We consider two cases: $x_i = \max(x)$, and $x_i \neq \max(x)$.

\paragraph{Case 1} $x_i = \max(x)$.

\begin{align*}
    &\abs{\softmax(cx)_i - \softmax(x)_i} \\
    &= \softmax(cx)_i - \softmax(x)_i \\
    &\leq 1 - \softmax(x)_i \\
    &= 1 - \frac{\exp(x_i)}{\sum_j \exp(x_j)} \\
    &\leq 1 - \frac{\exp(\max(x))}{\exp(\max(x)) + (n-1)\exp(\min(x))} \\
    &= 1 - \frac{1}{1 + (n-1)\exp(\min(x) - \max(x))}  \\
    &= 1 - \frac{1}{1 + \exp(\min(x) - \max(x) + d)} ,
\end{align*}
\noindent for some $d \in \mathbb{R}$. As this has the form of $\sigma$,
\begin{align*}
    \abs{\softmax(cx)_i - \softmax(x)_i} \\
        = 1 - \sigma(\Theta(\norm{x}))
        = \exp(-\Theta(\norm{x})).
\end{align*}

\paragraph{Case 2} $x_i \neq \max(x)$.
\begin{align*}
    &\abs{\softmax(cx)_i - \softmax(x)_i} \\
    &= \softmax(x)_i - \softmax(cx)_i \\
    &\leq 1 - \max(\softmax(x)) - 0 \\
    &= 1 - \softmax(\max(x)) ,
\end{align*}
\noindent which is identical to case 1.
\end{proof}

Finally, for completeness, we show that $\tanh$ exhibits the same property. The proof is very similar to sigmoid, following closely from the definition
\begin{equation*}
    \tanh(x) = \frac{\exp(2x) - 1}{\exp(2x) + 1} .
\end{equation*}

\begin{lemma} \label{thm:tanh-error}
The scaling error for $\tanh$ vanishes exponentially in the preactivation magnitude, i.e. for all $c \geq 1$,
\begin{equation*}
    \abs{\tanh(cx) - \tanh(x)} \leq \exp(-\Theta(\abs{x})) .
\end{equation*}
\end{lemma}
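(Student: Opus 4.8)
The plan is to reduce this to the sigmoid case already handled in \autoref{thm:sigmoid-error}, exploiting the algebraic identity $\tanh(x) = 2\sigma(2x) - 1$. This identity follows immediately from the stated formula $\tanh(x) = (\exp(2x)-1)/(\exp(2x)+1)$ by dividing numerator and denominator by $\exp(2x)$ and rearranging. Granting the identity, for any $c \geq 1$ I would write
\begin{equation*}
    \abs{\tanh(cx) - \tanh(x)} = 2 \abs{\sigma(2cx) - \sigma(2x)} = 2 \abs{\sigma(c \cdot 2x) - \sigma(2x)} ,
\end{equation*}
and then invoke \autoref{thm:sigmoid-error} with preactivation $2x$, noting that the scaling constant $c \geq 1$ is unchanged by this substitution. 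This yields a bound of $2 \, \Theta(\exp(-\abs{2x})) = \exp(-\Theta(\abs{x}))$, which is exactly the claim, with the factor of $2$ in the exponent harmlessly absorbed into $\Theta(\cdot)$.

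If instead a self-contained argument mirroring the sigmoid proof is preferred, I would first discard the trivial case $x = 0$ (where the error is $0$) and reduce to $x > 0$ using the oddness of $\tanh$, since $\abs{\tanh(cx) - \tanh(x)}$ is invariant under $x \mapsto -x$. For $x > 0$, monotonicity of $\tanh$ together with $c \geq 1$ gives $\tanh(cx) \geq \tanh(x)$, so the absolute value drops and the difference is bounded above by $1 - \tanh(x)$. A direct computation then shows $1 - \tanh(x) = 2/(\exp(2x)+1) = \Theta(\exp(-2x)) = \Theta(\exp(-\abs{x}))$, completing the bound.

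I do not anticipate a genuine obstacle: the result is structurally identical to \autoref{thm:sigmoid-error}, and the only care required is bookkeeping the factor of $2$ inside the exponent and handling the sign of $x$ via oddness. The cleanest route is the reduction to sigmoid, since it lets me borrow the case analysis already carried out there rather than repeating it.
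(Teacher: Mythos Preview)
Your proposal is correct. The second, self-contained argument you sketch is essentially the paper's own proof: bound $\abs{\tanh(cx)-\tanh(x)}$ by $1-\tanh(\abs{x})$ and compute this to be $2/(\exp(2\abs{x})+1)=\exp(-\Theta(\abs{x}))$. In fact your version is slightly more careful than the paper's, which writes $\abs{1-\tanh(x)}=1-\tanh(\abs{x})$ without first reducing to $x>0$ via oddness; that equality fails for $x<0$, so your explicit symmetry reduction patches a small presentational gap.

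Your first route, reducing to \autoref{thm:sigmoid-error} via the identity $\tanh(x)=2\sigma(2x)-1$, is a genuinely different and cleaner argument than what the paper does. It buys you reuse of the sigmoid case analysis wholesale, at the cost of depending on that earlier lemma; the paper instead repeats the direct computation from scratch, which keeps the lemma self-contained but duplicates work. Either is fine here.
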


\begin{proof}
\begin{align*}
    \abs{\tanh(cx) - \tanh(x)}
        &\leq \abs{1 - \tanh(x)} \\
        &= 1 - \tanh(\abs{x}) \\
        &= 1 - \frac{\exp(2\abs{x}) - 1}{\exp(2\abs{x}) + 1} \\
        &= \frac{\exp(2\abs{x}) + 1 - \exp(2\abs{x}) + 1}{\exp(2\abs{x}) + 1} \\
        &= \frac{2}{\exp(2\abs{x}) + 1} \\
        &= \exp(-\Theta(\abs{x})) .
\end{align*}
\end{proof}

Thus, applying these functions to a homogeneous input produces an output that is approximately scale-invariant in the parameters $\theta$. Thus, these functions act similarly to layer norm, which maps homogeneous input to scale-invariant output. But what happens if the input is \textit{approximately} homogeneous, rather than strictly homogeneous? In this case, we show that the output is approximately scale-invariant assuming $\norm{\theta}$ is sufficiently large.

\begin{theorem} \label{thm:approx-softmax}
Let $f(x; \theta)$ be approximately $k$-homogeneous in $\theta$. Then
the following functions are approximately scale-invariant in $\theta$:
\begin{align*}
    g_{\sigma} &= \sigma \circ f \\
    g_{\softmax} &= \softmax \circ f \\
    g_{\tanh} &= \tanh \circ f .
\end{align*}
\end{theorem}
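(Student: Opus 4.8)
The plan is to verify that each composite $g_\phi = \phi \circ f$ (for $\phi \in \{\sigma, \softmax, \tanh\}$) meets \autoref{def:approx-homo} with degree $k = 0$, i.e., that $\abs{g_\phi(x; c\theta) - g_\phi(x; \theta)}$ decays exponentially in $\norm{\theta}$ for $c \geq 1$ and large $\norm{\theta}$. The central idea is a triangle-inequality decomposition that cleanly separates the two sources of error: the error from $f$ being only \emph{approximately} homogeneous, and the error from rescaling an (approximately) homogeneous preactivation, the latter being exactly what \autoref{thm:sigmoid-error}, \autoref{thm:softmax-error}, and \autoref{thm:tanh-error} control.

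First I would fix $\phi$, write $y = f(x; \theta)$, and use approximate $k$-homogeneity to substitute $f(x; c\theta) = c^k y + \epsilon$ with $\abs{\epsilon} \leq \exp(-d\norm{\theta})$. Then I would split
\begin{equation*}
    \abs{\phi(c^k y + \epsilon) - \phi(y)} \leq \abs{\phi(c^k y + \epsilon) - \phi(c^k y)} + \abs{\phi(c^k y) - \phi(y)} .
\end{equation*}
The first term I would bound by Lipschitz continuity of $\phi$ (each of $\sigma$, $\tanh$, and $\softmax$ is Lipschitz with a constant independent of $\norm{\theta}$), giving a bound of order $\abs{\epsilon} \leq \exp(-d\norm{\theta})$. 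The second term is a pure rescaling by $c^k \geq 1$ (which holds since $c \geq 1$), so applying the relevant per-activation lemma with scaling factor $c^k$ in place of $c$ yields a bound of the form $\exp(-\Theta(\norm{y}))$. Summing the two and taking the slower of the two decay rates gives the desired $\exp(-d' \norm{\theta})$ bound, up to renaming constants.

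The hard part will be the last link: the per-activation lemmas decay in the \emph{preactivation} magnitude $\norm{y} = \norm{f(x;\theta)}$, whereas \autoref{def:approx-homo} demands decay in $\norm{\theta}$. I would close this gap by using that $f$ is (approximately) $k$-homogeneous with $k \geq 1$, so that $\norm{f(x;\theta)}$ grows at least linearly in $\norm{\theta}$ along any fixed direction bounded away from the zero set of $f$; then $\exp(-\Theta(\norm{y})) \leq \exp(-d''\norm{\theta})$ for large enough $\norm{\theta}$. This is exactly where care is needed, since the argument degrades when the preactivation is close to zero (e.g., near a sign change), and it is the reason the statement is asserted only for sufficiently large $\norm{\theta}$. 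A secondary bookkeeping point is that for $\softmax$ I would carry the argument through at the vector level and invoke \autoref{thm:softmax-error} elementwise, so that the norm-based decay there slots directly into the same template used for the scalar activations $\sigma$ and $\tanh$.
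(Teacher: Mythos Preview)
Your approach is correct and takes a genuinely different route from the paper's. The paper does not use a triangle-inequality/Lipschitz decomposition. Instead, it argues directly that because $\epsilon$ vanishes exponentially, for large enough $\norm{\theta}$ the perturbation cannot flip the sign of $c^k f(x;\theta)$ (for $\sigma$ and $\tanh$) nor change the $\arg\max$ (for $\softmax$); once sign and $\arg\max$ are preserved, the case analysis inside the proofs of \autoref{thm:sigmoid-error}, \autoref{thm:softmax-error}, and \autoref{thm:tanh-error} goes through verbatim for the perturbed preactivation $c^k f(x;\theta) + \epsilon$, yielding the lemma bounds directly without an intermediate Lipschitz step.

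Your version is more modular: it invokes the per-activation lemmas as black boxes rather than re-entering their case analysis, and it isolates the $\epsilon$ contribution via Lipschitz continuity. This is cleaner and would extend immediately to any activation that is both Lipschitz and satisfies an analogous scaling lemma. The paper's version is shorter but leans on the internals of those proofs. Both approaches land on the same residual obligation you correctly flag: converting the bound $\exp(-\Theta(\norm{f(x;\theta)}))$ coming out of the lemmas into the required $\exp(-d'\norm{\theta})$, which needs $\abs{f(x;\theta)}$ to grow with $\norm{\theta}$ and degrades near the zero set of $f(x;\cdot)$. The paper leaves this step entirely implicit; you make it explicit, which is an improvement.
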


\begin{proof}
If $f(x; \theta)$ is approximately $k$-homogeneous, then $f(x; c\theta) = c^k f(x; \theta) + \epsilon$ where $\norm{\epsilon} \leq \exp(-O(\norm{\theta}))$. Crucially, since $\epsilon$ vanishes for large norm, there is some $\rho$ where, for all $\theta$ such that $\rho < \norm{\theta}$:
\begin{align*}
    \sgn \big( c^k f(x; \theta) + \epsilon \big) &= \sgn \big( c^k f(x; \theta) \big) \\
    \arg \max \big( c^k f(x; \theta) + \epsilon \big) &= \arg \max \big( c^k f(x; \theta) \big) .
\end{align*}

Therefore, for $\theta$ such that $\norm{\theta} > \rho$, the bounds used in \autoref{thm:sigmoid-error}, \autoref{thm:softmax-error}, and \autoref{thm:tanh-error} hold for approximately homogeneous $f$. Thus, we can conclude that the output is approximately scale-invariant.
\end{proof}
\section{Transformers} \label{sec:transformers}
We introduce the notation ${\sim}k$-homogeneous to mean approximately $k$-homogeneous.
In this section, we show that the transformer encoder is ${\sim}1$-homogeneous. A transformer \citet{vaswani2017attention} is made up of three main components: an embedding layer, self attention sublayers, and feed-forward sublayers. Since the embedding layer is just a matrix multiplication, it is a $1$-homogeneous function of the input. Assuming the self attention and feed-forward sublayers have no bias terms, we show that they approximate functions preserving approximate $1$-homogeneity. As the full network is an initial embedding layer followed by these sublayers, the final output is ${\sim}1$-homogeneous. In the main paper, we discuss the connection between homogeneity and norm growth.

We base our analysis on the HuggingFace implementation\footnote{\url{https://huggingface.co/transformers/_modules/transformers/modeling_bert.html\#BertModel}} of BERT \citep{Wolf2019HuggingFacesTS}.
To aid analysis, we make some simplifying assumptions, which are discussed along with the definitions. We later show empirically that homogeneity for the unsimplified versions is similar.


\subsection{Transformer Definition}

The transformer encoder is a cascade of alternating \emph{multi-head self-attention} sublayers and \emph{feedforward} sublayers. Each multi-head self-attention sublayer can be further broken down as an aggregation of \emph{self-attention} heads.
Let $\lnorm(\cdot)$ denote a layer norm followed by a learned affine transformation.
Here we will consider the pre-norm transformer variant \citep{xiong2020layer}, meaning that $\lnorm$ comes before the residual connection wherever it appears.\footnote{The post-norm transformer applies these operations in the opposite order.} We will also assume that there are no biases, making all affine transformations into strict linear transformations.

\begin{definition}[Self-attention head]
Given parameters $W^k$, $W^q$, $W^v$ and input $X\in \mathbb{R}^{Tn}$, we define a \textit{self-attention head} $\selfattn$ as
\begin{align*}
    K &= W^k X \\
    Q &= W^q X \\
    V &= W^v X \\
    A &= \softmax(QK^\top / \sqrt{d_k}) \\
    H &= AV ,
\end{align*}
where $H$ is the output tensor.
\end{definition}

The multi-head self-attention sublayer computes several attention heads in parallel and aggregates them into a single sequence of vectors.

\begin{definition}[Multi-head self-attention sublayer]
Let $X \in \mathbb{R}^{Tn}$ be the input. We now define the \textit{$k$-multi-head self-attention sublayer} $\mselfattn_k$. First, we compute $k$ self-attention heads in parallel to produce $H_1, \cdots, H_k$. We then concatenate these along the feature axis to form $H$, and compute the sublayer output $Y$ as
\begin{equation*}
    \mselfattn_k(X) = \lnorm(WH) + X .
\end{equation*}
\end{definition}
Finally, the linear sublayer is the other component of the transformer.
\begin{definition}[Feedforward sublayer]
Let $X \in \mathbb{R}^{Tn}$ be the input. We compute the \textit{feedforward sublayer} $\ff$ according to
\begin{equation*}
    \ff(X) = \lnorm(W^f\relu(W^i X)) + X .
\end{equation*}
\end{definition}

\subsection{Results}

\begin{theorem}
If $X$ is ${\sim}1$-homogeneous in parameters $\theta$, then 
$\selfattn(X; W^k, W^q, W^v)$ is ${\sim}1$-homogeneous in the concatenation of $\theta, W^k, W^q, W^v$.
\end{theorem}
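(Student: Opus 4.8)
The plan is to treat the self-attention head as a composition of elementary operations---the three linear maps producing $K$, $Q$, $V$, the quadratic score $QK^\top/\sqrt{d_k}$, the softmax $A$, and the final mixing $H = AV$---and to propagate approximate homogeneity through each stage using the closure lemmas established in \autoref{sec:approx-homogeneity}. The degree of the output is then obtained by bookkeeping how each operation transforms the degree, exactly as in Table~\ref{tbl:homogeneity}, but now for the \emph{approximate} notion.

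First I would fix the homogeneity of the atomic quantities. Viewed as functions of the full parameter concatenation $(\theta, W^k, W^q, W^v)$, each weight matrix $W^k, W^q, W^v$ is exactly $1$-homogeneous (scaling all parameters by $c$ scales it by $c$), while $X$ is ${\sim}1$-homogeneous by hypothesis, trivially extended to the full concatenation since $X$ does not depend on the attention weights. Applying \autoref{thm:homo-prod} to each of $K = W^kX$, $Q = W^qX$, $V = W^vX$ then makes $K,Q,V$ approximately homogeneous, with degree the sum of the factor degrees. These are matrix products rather than elementwise ones, but since each entry of a matrix product is a sum of scalar products, \autoref{thm:homo-sum} and \autoref{thm:homo-prod} combine to cover matrix multiplication; multiplying by the constant $1/\sqrt{d_k}$ leaves the degree and the exponentially vanishing error unchanged. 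Applying \autoref{thm:homo-prod} once more to $QK^\top$ yields an approximately homogeneous score matrix of a definite degree.

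The crux is the softmax. Because the score $QK^\top/\sqrt{d_k}$ is approximately homogeneous of a definite degree, the hypothesis of \autoref{thm:approx-softmax} is met, and it gives that $A = \softmax(QK^\top/\sqrt{d_k})$ is approximately scale-invariant, i.e.\ ${\sim}0$-homogeneous. This is the essential step: it is the only place a genuinely non-homogeneous nonlinearity appears, and it decouples the attention distribution from the overall scale of the parameters---precisely the mechanism behind saturation. Finally I would apply \autoref{thm:homo-prod} to $H = AV$, combining the degree of the scale-invariant $A$ with that of $V$ to recover the stated homogeneity of the head output.

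I expect the main obstacle to be controlling the error terms through the iterated products, rather than the degree accounting itself. Each application of \autoref{thm:homo-prod} introduces cross terms of the form $c^{k}f(\theta)\,\epsilon_g$ whose prefactor grows with the scaling constant $c$, and since \autoref{def:approx-homo} demands a bound holding uniformly over all $c \geq 1$, I would need to argue carefully that these cross terms still vanish in $\norm{\theta}$ after chaining through $K,Q,V \to QK^\top \to A \to AV$. Verifying the precondition of \autoref{thm:approx-softmax}---that the $\arg\max$ structure of the score is preserved once $\norm{\theta}$ exceeds the threshold $\rho$, so that the sigmoid/softmax error bounds transfer to the approximately homogeneous input---is the other delicate point, whereas adding the degrees along the composition is routine.
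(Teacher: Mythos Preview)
Your plan is essentially identical to the paper's proof: track the approximate-homogeneity degree through $K,Q,V$ (each ${\sim}2$ by \autoref{thm:homo-prod}), then $QK^\top$ (${\sim}4$), invoke \autoref{thm:approx-softmax} to make $A$ approximately scale-invariant, and finish with the product $AV$. The paper's argument is exactly this chain of closure lemmas, stated more tersely; your additional remarks about error propagation through iterated products and about verifying the $\arg\max$ precondition in \autoref{thm:approx-softmax} go beyond what the paper writes but do not alter the route.

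One point to make explicit, since you defer to ``the stated homogeneity'' rather than writing out the final number: the product rule gives $\deg(AV)=0+2=2$, not the ${\sim}1$ claimed in the statement. The paper's own proof contains the same slip in its last sentence. This is harmless for the overall transformer argument, because the subsequent $\lnorm$ in the multi-head sublayer collapses any incoming degree to ${\sim}1$ regardless; but as stated, neither your bookkeeping nor the paper's actually lands on ${\sim}1$ for the bare head. If you carry out the plan, you should state the degrees numerically and flag the discrepancy rather than asserting that the chain ``recovers'' the stated value.
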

\begin{proof}
Consider a self-attention layer receiving a ${\sim}1$-homogeneous input matrix $X \in \mathbb{R}^{Tn}$ where $T$ is the sequence length. Using the homogeneity rule for multiplication, $K,Q,V$ are each ${\sim}2$-homogeneous, as homogeneity is additive over multiplication. By the same argument, $QK^\top$ is ${\sim}4$-homogeneous.
In \autoref{thm:approx-softmax}, we show that if the input to softmax is approximately homogeneous, then the output is approximately scale-invariant.
Thus, $A$ is approximately $0$-homogeneous. Then $AV$ is ${\sim}1$-homogeneous.
\end{proof}
We show that the multi-head component that aggregates multiple heads into a shared representation also preserves approximate $1$-homogeneity.
\begin{theorem}
If $X$ is ${\sim}1$-homogeneous in parameters $\theta$, then
$\mselfattn$ is ${\sim}1$-homogeneous in the full parameters.
\end{theorem}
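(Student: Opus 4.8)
The plan is to build $\mselfattn_k$ up from its components and apply the closure properties for approximate homogeneity established earlier, handling each stage of the definition $\mselfattn_k(X) = \lnorm(WH) + X$ in turn.

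First I would invoke the preceding per-head result: since $X$ is ${\sim}1$-homogeneous, each head $H_i = \selfattn(X; W^k, W^q, W^v)$ is ${\sim}1$-homogeneous in the concatenation of $\theta$ and that head's key, query, and value parameters. Because the heads are computed independently and share no parameters (apart from the common ${\sim}1$-homogeneous input $X$), scaling the full parameter vector by $c$ simultaneously scales each head's relevant parameters by $c$, so every $H_i$ remains ${\sim}1$-homogeneous in the full parameters. Concatenating the heads along the feature axis to form $H$ preserves this: by the vectorized definition of approximate homogeneity (which requires the bound to hold elementwise), and since every entry of $H$ is an entry of some $H_i$, the bound for each $H_i$ transfers directly to $H$, so $H$ is ${\sim}1$-homogeneous.

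Next I would push $H$ through the remaining operations. The linear map $W$ raises homogeneity by one (the Linear row of the homogeneity table, extended to the approximate setting by the product/sum closure lemmas), so $WH$ is ${\sim}2$-homogeneous. Applying $\lnorm$ --- layer norm followed by a bias-free learned linear map --- first sends the ${\sim}2$-homogeneous $WH$ to a ${\sim}0$-homogeneous (approximately scale-invariant) quantity by the layer-norm lemma, and the subsequent linear map raises this back to ${\sim}1$-homogeneous. Crucially, that lemma was proved in the $\epsilon$ notation and therefore already tolerates an approximately (rather than exactly) homogeneous input, so no additional argument is needed there. Finally, the residual connection adds $\lnorm(WH)$, which is ${\sim}1$-homogeneous, to the ${\sim}1$-homogeneous input $X$; by the sum lemma (\autoref{thm:homo-sum}) the result is ${\sim}1$-homogeneous, which is the claim.

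The main obstacle is bookkeeping the accumulated error rather than any single hard inequality: each composition contributes its own exponentially vanishing $\epsilon$, and I must verify these combine into one bound of the required form. Sums over the finitely many heads and the bias-free linear maps preserve an exponentially vanishing error (finite sums and bounded-coefficient linear combinations of $\exp(-d\norm{\theta})$ terms remain $\exp(-d'\norm{\theta})$), while the only delicate step --- the division inside layer norm --- is already controlled by the layer-norm lemma, which absorbs the input error into its output bound provided $\norm{\theta}$ exceeds the threshold $\rho$ that keeps the denominator bounded away from zero.
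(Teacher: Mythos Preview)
Your argument is correct and follows the same route as the paper's proof: show $H$ is ${\sim}1$-homogeneous from the per-head result, lift to ${\sim}2$ via the linear map $W$, drop back to ${\sim}1$ via $\lnorm$, and finish with the sum lemma on the residual connection. The paper's version is terser (it simply asserts $WH$ is ${\sim}2$-homogeneous and $\lnorm(WH)$ is ${\sim}1$-homogeneous without spelling out the concatenation step or the error bookkeeping), but your added detail on concatenation and on the accumulated $\epsilon$ terms is sound and does not change the strategy.
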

\begin{proof}
Since $Wh$ is ${\sim}2$-homogeneous, $\lnorm(WH)$ is ${\sim}1$-homogeneous. The input $X$ is also ${\sim}1$-homogeneous by assumption, meaning that the sum is also ${\sim}1$-homogeneous.
\end{proof}
Finally, we turn to analyzing the feedforward sublayer of the transformer.
\begin{theorem}
If $X$ is ${\sim}1$-homogeneous, then $\ff(X; W^f, W^i)$ is ${\sim}1$-homogeneous in the full parameters.
\end{theorem}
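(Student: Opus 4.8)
The plan is to prove this exactly as the two preceding theorems for the attention sublayers were proved: by chaining together the approximate-homogeneity closure properties established earlier in this section, tracking the degree $k$ through each stage of $\ff(X) = \lnorm(W^f\relu(W^i X)) + X$. Since all affine maps are assumed biasless, every weight multiplication is a strict linear transformation, so each such step raises the degree by one via the product rule (\autoref{thm:homo-prod}), a weight matrix being $1$-homogeneous in its own entries.

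Concretely, I would start from the hypothesis that $X$ is ${\sim}1$-homogeneous in $\theta$. Applying \autoref{thm:homo-prod} to $W^i X$ gives a ${\sim}2$-homogeneous quantity in the concatenation of $\theta$ and $W^i$; the lemma that $\relu$ preserves approximate $k$-homogeneity then leaves $\relu(W^i X)$ at ${\sim}2$; and a second application of \autoref{thm:homo-prod} to the multiplication by $W^f$ yields that $W^f\relu(W^i X)$ is ${\sim}3$-homogeneous. These are the ``Linear'' and ``ReLU'' rows of \autoref{tbl:homogeneity} applied in sequence.

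The one genuinely delicate step is the layer norm. Here I would invoke the lemma showing that $\lnorm$ sends an approximately $k$-homogeneous input to an approximately $0$-homogeneous (scale-invariant) output, followed by the learned linear map that lifts the degree back to ${\sim}1$ via \autoref{thm:homo-prod}; together these give the ``LayerNorm + Affine: $k \to 1$'' behavior of \autoref{tbl:homogeneity}, so $\lnorm(W^f\relu(W^i X))$ is ${\sim}1$-homogeneous in the full parameter concatenation. I flag this as the main obstacle because it is the only point where the \emph{approximate} nature matters: the layer-norm lemma requires $\norm{\theta}$ to be large enough that the vanishing error term does not destabilize the denominator. That precondition has, however, already been discharged in the proof of the layer-norm lemma, so no new work is needed.

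Finally I would close the residual connection. Both summands, $\lnorm(W^f\relu(W^i X))$ and $X$, are now ${\sim}1$-homogeneous, so the sum rule (\autoref{thm:homo-sum}) gives that $\ff(X; W^f, W^i)$ is ${\sim}1$-homogeneous in the full parameters, completing the argument. The whole proof is thus a routine composition of the established closure facts, mirroring the structure used for $\selfattn$ and $\mselfattn$.
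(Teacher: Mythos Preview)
Your proposal is correct and follows essentially the same route as the paper's proof: track the approximate-homogeneity degree through $W^i$ (${\sim}2$), $\relu$ (${\sim}2$), $W^f$ (${\sim}3$), then $\lnorm$ with its affine head (${\sim}1$), and finally close the residual via the sum rule. The paper's proof is simply a terser statement of exactly this chain.
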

\begin{proof}
Multiplying by each $W$ increases approximate homogeneity by $1$, and $\relu$ preserves approximate homogeneity.
So the input to $\lnorm$ is ${\sim}3$-homogeneous. Thus, its output is ${\sim}1$-homogeneous, and adding $X$ preserves approximate $1$-homogeneity.
\end{proof}
Together, these results suggest that the pre-norm transformer output is ${\sim}1$-homogeneous, assuming its input is ${\sim}1$-homogeneous. This precondition for the input holds in the ``base case'' of standard embeddings. By induction, we can imagine the output of a biasless pre-norm transformer encoder of any depth to be ${\sim}1$-homogeneous.

Interestingly, the homogeneity arguments do not work out if we instead consider the post-norm transformer architecture \citep{xiong2020layer}.




\section{Sum Derivation} \label{sec:integral}
\subsection{Aligned Case} \label{sec:int-aligned}
Assume that $\norm{\theta_t} \approx 0$. Then,
\begin{align*}
    \norm{\theta_t} &\appropto \int \eta_t \norm{\theta_t} \mathrm{d} t \\
    \frac{\mathrm{d}}{\mathrm{d}t} \norm{\theta_t} &\appropto \eta_t \norm{\theta_t} \\
    \frac{\mathrm{d} \norm{\theta_t}}{\norm{\theta_t}} &\appropto \eta_t \mathrm{d} t \\
    \log \norm{\theta_t} &\appropto \int \eta_t \mathrm{d} t \\
    \norm{\theta_t} &\appropto \exp \left( \int \eta_t \mathrm{d} t \right) .
\end{align*}
Plugging in $\eta_t = 1/\sqrt{t}$, we get $\norm{\theta_t} \appropto \exp(\sqrt{t})$.
\subsection{Misaligned Case} \label{sec:int-misaligned}
First, we derive the sum approximation for $\norm{\theta_t}$.
We start with the fact that $\theta_{t+1} = \theta_t + \delta_t$ and misalignment, i.e., $\theta_t^\top \cdot \delta_t = 0$.
\begin{align*}
    \norm{\theta_{t+1}}^2
        &= (\theta_t + \delta_t) \cdot (\theta_t + \delta_t) \\
        &= \norm{\theta_t}^2 + \theta_t^\top \delta_t + \norm{\delta_t}^2 \\
        &= \norm{\theta_t}^2 + \norm{\delta_t}^2 \\
        &= \norm{\theta_0}^2 + \sum_{i=0}^t \norm{\delta_i}^2 .
\end{align*}
Taking the square root of both sides, $\norm{\theta_t}$ is roughly proportional to $\sum_{i=0}^t \norm{\delta_i}^2$.

Next, we show how to solve the integral, similarly to \autoref{sec:int-aligned}.
\begin{align*}
    \norm{\theta_t}^2 &\appropto \int \norm{\delta_t}^2 \mathrm{d} t\\
    \norm{\theta_t}^2 &\appropto \int \eta_t^2 \norm{\theta_t}^2 \mathrm{d} t \\
    \frac{\mathrm{d}}{\mathrm{d}t} \norm{\theta_t}^2 &\appropto \eta_t^2 \norm{\theta_t}^2 \\
    \frac{\mathrm{d} \norm{\theta_t}^2 }{\norm{\theta_t}^2} &\appropto \eta_t^2 \mathrm{d} t \\
    \log \norm{\theta_t}^2 &\appropto \int \eta_t^2 \mathrm{d} t .
\end{align*}
Now, we plug in the $\eta_t = 1 / \sqrt{t}$ learning rate:
\begin{align*}
    \log \norm{\theta_t}^2
        &\appropto \int ( 1/\sqrt{t})^2 \mathrm{d} t \\
        &\appropto \int \frac{\mathrm{d} t}{t} \\
        &\appropto \log t .
\end{align*}
So, in conclusion: $\norm{\theta_t} \appropto \sqrt{t}$.
\section{Weight Decay} \label{sec:weight-decay}

Weight decay regularizes the loss by the squared $\ell_2$ norm, modulated by a decay factor $\lambda$. For GD, this can be written
\begin{equation}
    \delta_t = -\eta_t \nabla_{\theta_t} L -  \lambda \theta_t . \label{eq:wd}
\end{equation}
Intuitively, the new term ${-}\lambda \theta_t$ will influence each step to point towards $0$.
Thus, large values of $\lambda$ might intuitively be expected to hinder or prevent norm growth.
While we leave developing a more complete theoretical story to future work, 
here we empirically investigate the interplay of a constant learning rate $\eta$ and weight decay $\lambda$ by training a variety of transformer language models on Wikitext-2 for $1$ epoch.\footnote{$1$ epoch is chosen because of the computational cost of running this experiment over a large grid. In \autoref{sec:norm-growth}, we found that growth continued beyond $1$ epoch using the default AdamW settings.} We use the AdamW \citep{DBLP:journals/corr/abs-1711-05101} optimizer, varying $\lambda$ and $\eta$ across a range of common values, keeping all other hyperparameters constant.
\autoref{fig:norm-grids} visualizes the phase transition for norm growth as a function of $\lambda, \eta$.
The norm growth behavior seems to largely depend on weight decay, with a threshold for $\lambda$ lying between $0.01$ and $0.001$. While the trend likely depends on the optimizer, we can infer for AdamW at least that norm growth is probable when $\lambda = 0.01$, which is a common choice, e.g., reflecting default settings in PyTorch.
Thus, while large values of $\lambda$ will indeed hinder norm growth, we find preliminary empirical evidence that standard choices (${\sim}0.01$) do not prevent it.

\begin{figure}[t]
    \centering
    \includegraphics[width=\columnwidth]{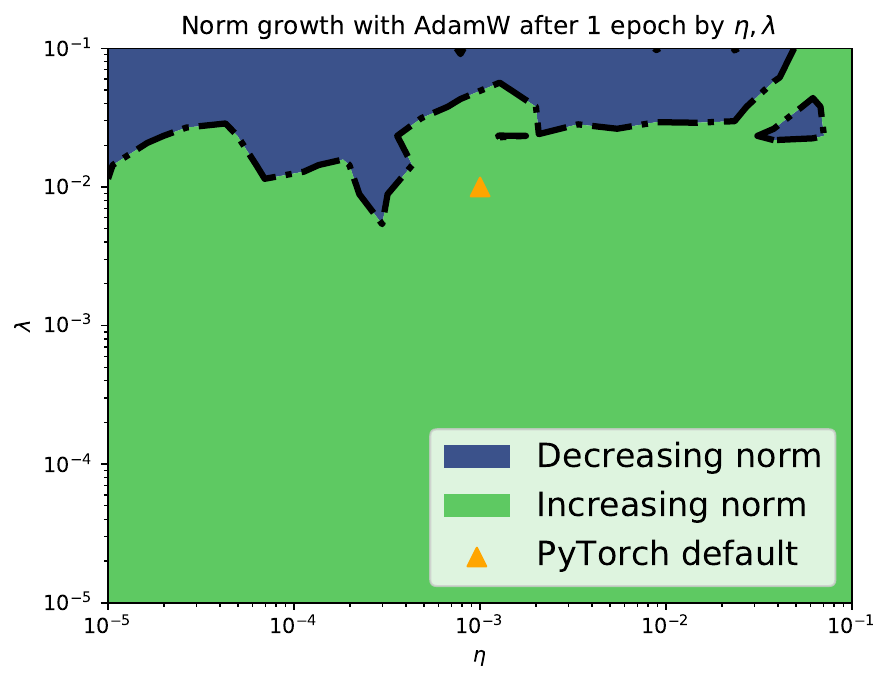}
    \caption{Norm growth over the first epoch, varying $\eta, \lambda$. The triangle shows the default AdamW hyperparameters in PyTorch.}
    \label{fig:norm-grids}
\end{figure}

\end{document}